%% file: main_arxiv.tex
\documentclass{article}

\usepackage{arxiv}

\usepackage[utf8]{inputenc} 
\usepackage[T1]{fontenc}    
\usepackage{hyperref}       
\usepackage{url}            
\usepackage{booktabs}       
\usepackage{amsfonts}       
\usepackage{nicefrac}       
\usepackage{microtype}      
\usepackage{lipsum}		
\usepackage{graphicx}
\usepackage{natbib}
\usepackage{doi}

\usepackage{microtype}
\usepackage{subcaption}
\usepackage{booktabs} 

\usepackage{wrapfig}

\usepackage{amsmath}
\usepackage{amssymb}
\usepackage{mathtools}
\usepackage{amsthm}
\usepackage{algorithm}
\usepackage{algorithmic}

\usepackage{enumitem}
\setlist[itemize]{leftmargin=*, labelsep=.3em}
\setlist[enumerate]{leftmargin=*, labelsep=.3em}

\usepackage{tikz}
\usetikzlibrary{arrows.meta, bending}

\usepackage{mdframed}
\newmdenv[
  linecolor=pink!60!purple,
  backgroundcolor=pink!30,
  linewidth=0pt,
  roundcorner=2pt,
  skipabove=3pt,
  skipbelow=1pt,
  innerleftmargin=3pt,
  innerrightmargin=3pt,
  innertopmargin=3pt,
  innerbottommargin=3pt
]{pinkbox}

\newmdenv[
  linecolor=blue!60!cyan,
  backgroundcolor=blue!10,
  linewidth=0pt,
  roundcorner=2pt,
  skipabove=3pt,
  skipbelow=1pt,
  innerleftmargin=3pt,
  innerrightmargin=3pt,
  innertopmargin=3pt,
  innerbottommargin=3pt
]{bluebox}

\usepackage[dvipsnames]{xcolor}
\definecolor{mygreen}{HTML}{96D93D}
\definecolor{myblue}{HTML}{26808F}
\definecolor{mypurple}{HTML}{420052}

\usepackage[most]{tcolorbox}
\tcbset{colback=blue!5!white, colframe=blue!40!black, boxrule=0.5pt, arc=3pt, left=6pt, right=6pt, top=3pt, bottom=3pt}
\newtcolorbox{propbox}{colback=blue!5!white, colframe=blue!40!black, boxrule=0.5pt, arc=3pt, left=3pt, right=3pt, top=2pt, bottom=2pt}

\usepackage[capitalize,noabbrev]{cleveref}

\theoremstyle{plain}
\newtheorem{theorem}{Theorem}[section]
\newtheorem{proposition}[theorem]{Proposition}

\theoremstyle{definition}
\newtheorem{definition}[theorem]{Definition}

\theoremstyle{remark}
\newtheorem{remark}[theorem]{Remark}

\usepackage[textsize=tiny]{todonotes}

\title{Probing the Geometry of Diffusion Models\\ with the String Method}

\author{{\hspace{-2mm}Elio Moreau}\thanks{Elio Moreau is the corresponding author. Email: elio.moreau@ens.psl.eu.} \\
	\hspace{-1mm}Capital Fund Management \\
 \hspace{-1mm} 23 Rue de l'Université, 75007 Paris \\
	\texttt{elio.moreau@ens.psl.eu} \\
	\And
	{Florentin Coeurdoux} \\
	Capital Fund Management \\
  23 Rue de l'Université, 75007 Paris \\
	\texttt{florentin.coeurdoux@cfm.com} \\
    \AND
    {\hspace{1mm}Grégoire Ferre} \\
	Capital Fund Management \\
  23 Rue de l'Université, 75007 Paris \\
	\texttt{gregoire.ferre@cfm.com} \\
    \And
    {\hspace{1mm}Eric ~Vanden-Eijnden} \\
    ML Lab, Capital Fund Management \\
  23 Rue de l'Université, 75007 Paris \\
  Courant Institute of Mathematical Sciences, \\
  New York University,\\
  New York, NY 10012, USA,\\
	\texttt{eve2@cims.nyu.edu} \\
}

\begin{document}

\maketitle

\begin{abstract}
Understanding the geometry of learned distributions is fundamental to improving and interpreting diffusion models, yet systematic tools for exploring their landscape remain limited. Standard latent-space interpolations fail to respect the structure of the learned distribution, often traversing low-density regions. We introduce a framework based on the string method that computes continuous paths between samples by evolving curves under the learned score function. Operating on pretrained models without retraining, our approach interpolates between three regimes: pure generative transport, which yields continuous sample paths; gradient-dominated dynamics, which recover minimum energy paths (MEPs); and finite-temperature string dynamics, which compute principal curves---self-consistent paths that balance energy and entropy. We demonstrate that the choice of regime matters in practice. For image diffusion models, MEPs contain high-likelihood but unrealistic ``cartoon'' images, confirming prior observations that likelihood maxima appear unrealistic; principal curves instead yield realistic morphing sequences despite lower likelihood. For protein structure prediction, our method computes transition pathways between metastable conformers directly from models trained on static structures, yielding paths with physically plausible intermediates. Together, these results establish the string method as a principled tool for probing the modal structure of diffusion models---identifying modes, characterizing barriers, and mapping connectivity in complex learned distributions.
\end{abstract}

\section{Introduction}

Generative models based on diffusion and flow matching have achieved remarkable success in learning complex data distributions. These models transport individual samples from noise to data, implicitly encoding an energy landscape through learned score functions. Yet this point-wise perspective obscures the global geometry of the learned distribution: its modal structure, the barriers between modes, and the connectivity of the data manifold.

Understanding pathways between samples has broad applications: morphing between configurations, identifying transition states, and revealing mechanisms underlying rare events. In molecular systems, such pathways explain conformational changes and folding; in other domains, they can illuminate how the learned distribution connects distinct modes. However, current generators provide only samples, not the pathways between them. Exploiting the geometric information encoded in the learned landscape requires a principled definition of transition pathways, along with a computational procedure to find them.

We introduce a framework that evolves entire \emph{curves} of samples---strings---rather than individual points. By controlling how these strings interact with the learned score function, we can probe different aspects of the distribution geometry. Figure~\ref{fig:teaser} previews our main finding: the choice of dynamics reveals a fundamental tension between likelihood and realism. Paths that maximize likelihood traverse ``cartoon'' configurations---simplified, stylized images that the model assigns high probability but that lie outside the typical set. Paths that account for entropy, in contrast, remain within the typical set and produce perceptually natural transitions.
\begin{figure}
    \centering
    \includegraphics[width=0.9\linewidth]{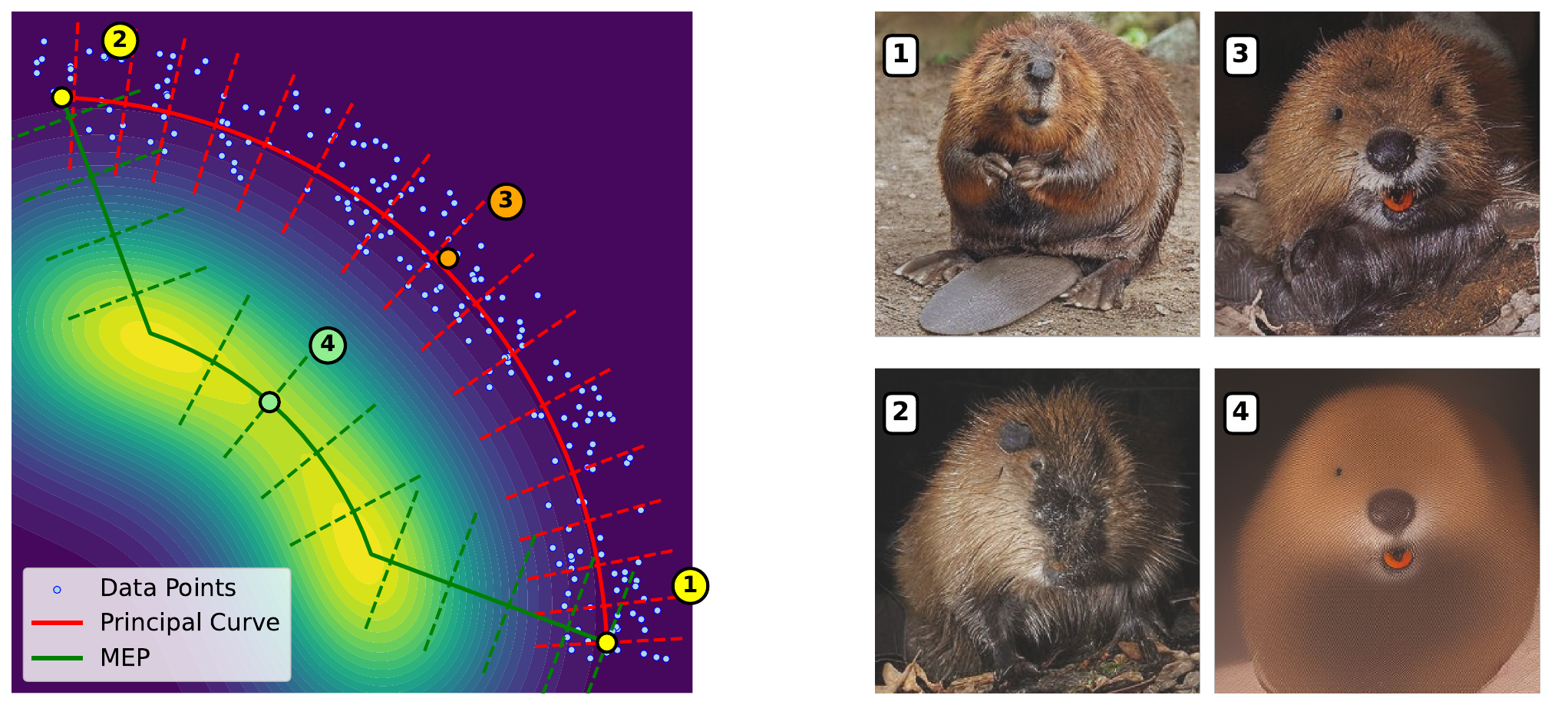}
    \caption{\textbf{The likelihood-realism paradox.} Top: schematic showing the MEP (green) passing through the high-likelihood region (yellow), while the principal curve (red) stays within the typical set where data concentrates (blue points). Dashed lines indicate Voronoi cells---regions of points closest to each image along the string. Bottom: actual images at numbered locations. Endpoints (1, 2) are identical for both paths; the principal curve intermediate (3) is realistic; the MEP intermediate (4) is cartoonish. Full pathways computed by our method are shown in Figures~\ref{fig:beaver_lambda} and \ref{fig:beaver_temperature}.}
    \label{fig:teaser}
\end{figure}
This paper makes three main contributions:
\begin{enumerate}
    \item We adapt the string method from computational chemistry to work with learned score functions, enabling pathway computation from pretrained generative models without explicit energy functions.
    \item We show how three choices of dynamics---pure transport, gradient-dominated, and finite-temperature---reveal complementary aspects of the learned landscape.
   \item We demonstrate that accounting for entropy is essential for realistic pathways in high dimensions---principal curves traverse the typical set while MEPs do not---and illustrate this on image morphing and protein conformational transitions.
\end{enumerate}

Importantly, our method operates directly on pretrained models without any retraining or fine-tuning. Given only access to the learned velocity field $b_t$ and score $s_t$, we can compute strings in any of the three regimes. This makes the approach immediately applicable to existing models.

\subsection{Related Work}

\paragraph{Transition path methods.} Computing pathways between metastable states has a long history in computational chemistry. The nudged elastic band method \citep{henkelman2000climbing} and string methods \citep{e2002string, ren2007simplified,maragliano2006string} evolve chains of configurations to find minimum (free) energy paths. Extensions like the finite-temperature string method \citep{e2005finite} incorporate entropic effects by computing principal curves \citep{hastie1989principal}, building on concepts from transition path theory \citep{vanden2006towards, e2010transition}. These methods traditionally require explicit, time-independent energy functions or force fields. Our key contribution is adapting them to time-dependent energies defined implicitly through learned score functions, enabling pathway computation directly from pretrained generative models.

\paragraph{Generative modeling.} Diffusion models and flows \citep{ho2020denoising, song2021scorebased} learn to reverse a noising process. Stochastic interpolants \citep{albergo2023building,albergo2023stochastic}, flow matching \citep{lipman2022}, and rectified flows \citep{liu2022} provide unified frameworks connecting these approaches. Our method builds on this foundation, using the learned velocity and score fields to define string dynamics. Recent work observed that likelihood-maximizing points in diffusion models appear ``cartoonish''---perceptually unrealistic despite high probability \citep{guth2025learning, karczewski2025diffusion}. We attribute this paradox to concentration of measure in high dimensions, and resolve it via finite-temperature string dynamics that account for entropy.

\paragraph{Image morphing.} Interpolating between images in generative models is typically done via linear paths in latent space, but such paths often traverse low-density regions producing unrealistic intermediates. DiffMorpher \citep{zhang2024diffmorpher} addresses this through attention interpolation and self-attention guidance; other methods optimize latent trajectories \citep{wang2023interpolating}. Our framework differs by grounding interpolation in the geometry of the learned distribution, showing that naive approaches fail by ignoring entropy and that accounting for it yields realistic paths without task-specific modifications.

\paragraph{Protein conformations.} Recent generative models learn conformational distributions from structural databases: AlphaFlow \citep{jing2024alphaflow} fine-tunes AlphaFold with flow matching, while DiG \citep{zheng2023digmultimod}, EigenFold \citep{jing2023eigenfold}, ConfDiff \citep{wang2024confdiff}, and FoldingDiff \citep{wu2024foldingdiff} train diffusion models on conformational ensembles. These methods sample individual conformations but do not provide transition pathways between them---yet such pathways are essential for understanding protein function, since biological activity often involves conformational changes \citep{frauenfelder1991energy, henzler2007dynamic}. Our string method complements these approaches by computing pathways directly from the learned score, potentially revealing folding mechanisms and conformational change dynamics from models trained only on static structures.

\section{Methodology}
\label{sec:method}

\subsection{Generative Models as Score-Based Dynamics}
\label{sec:gen_mod}

Diffusion and flow-matching models learn to reverse a noising process that transforms data into noise. At the heart of these methods is a time-dependent density $\rho_t(x)$ interpolating between a simple density $\rho_0(x)$ (typically Gaussian) and the data density $\rho_1(x)$. The model learns either a velocity field $b_t(x)$ or a score function $s_t(x) = \nabla \log \rho_t(x)$, which are related through the structure of the interpolation.

Sampling proceeds by integrating a forward-time dynamics. In its most general form, this takes the shape of an SDE:
\begin{equation}
    dx_t = \underbrace{b_t(x_t)}_{\text{transport}} dt + \underbrace{\gamma_t^2 s_t(x_t)}_{\text{score correction}} dt + \underbrace{\sqrt{2}\gamma_t\, dW_t}_{\text{noise}},
    \label{eq:general_sde}
\end{equation}
where the volatility $\gamma_t \geq 0$ can be tuned. Setting $\gamma_t = 0$ recovers the deterministic probability flow ODE; positive $\gamma_t$ yields stochastic samplers.

A key observation is that $s_t(x) = -\nabla V_t(x)$ where $V_t(x) = -\log \rho_t(x)$ is an implicit energy landscape. While we cannot evaluate $V_t$ directly, the learned score provides access to its gradient---exactly what the string method requires.

Crucially, the score is estimated via Stein's identity (Gaussian integration by parts), which requires adding noise to the signal. As $t \to 1$, the noise vanishes and this estimator degrades---the learned score becomes unreliable precisely at the data distribution. This is why generative models sample via nonequilibrium transport (pushing noise toward data) rather than Langevin dynamics on $s_1$. For the same reason, we cannot simply evolve strings under $s_1$; instead, we must work dynamically across the full time interval, leveraging reliable scores at intermediate times. More theoretical details on diffusion models can be found in Appendix~\ref{app:diff}.

\subsection{The String Method: General Formulation}
\label{sec:string}

The classical string method \citep{e2002string, ren2007simplified} evolves curves under a time-independent potential $V(x)$. Here we generalize to time-dependent velocities $v_t(x)$ arising from diffusion models. For $s \in (0,1)$, the string evolves according to
\begin{equation}
    \dot{\phi}_t(s) = v_t(\phi_t(s)) + \lambda_t(s) \partial_s \phi_t(s),
    \label{eq:string_continuous}
\end{equation}
where $\lambda_t(s)$ is a Lagrange multiplier enforcing the constraint $|\partial_s \phi_t(s)| = \text{const}$ in $s$. This constraint prevents bunching or spreading of points---without it, points would cluster at attractors of $v_t$ rather than tracing a path between them. The endpoints $\phi_t(0)$ and $\phi_t(1)$ follow the probability flow ODE $\dot{\phi}_t = b_t(\phi_t)$, which serve as boundary conditions for the string evolution.

\paragraph{Discrete algorithm.} We discretize the string as $N+1$ images $\{\phi_t^{(i)}\}_{i=0}^N$. The endpoints $\phi_t^{(0)}$ and $\phi_t^{(N)}$ follow the probability flow; the interior images $i = 1, \ldots, N-1$ evolve via alternating steps (Figure~\ref{fig:string}):

\textbf{Step 1: Evolution.} Move each interior image using~$v_t$:
\begin{equation}
    \phi_{t+\Delta t}^{(i)} = \phi_t^{(i)} + \Delta t \, v_t(\phi_t^{(i)}), \quad i = 1, \ldots, N-1.
\end{equation}

\textbf{Step 2: Reparametrization.} Redistribute images to restore equal arc-length spacing via linear or cubic spline interpolation (for details see Appendix~\ref{sec:reparam}).

The reparametrization implicitly enforces the $\lambda_t$ constraint. To fully specify the dynamics, we must choose both the velocity field $v_t$ for interior images and an initial string $\{\phi_0^{(i)}\}_{i=0}^N$; both choices are discussed in Section~\ref{sec:three_regimes}. For more details on the string method, we refer the reader to Appendix~\ref{app:string}.

\subsection{From Transport to Energy to Entropy}
\label{sec:three_regimes}

What should $v_t$ be? We consider three choices, each motivated by limitations of the previous one.

\subsubsection{Pure Transport ($\gamma_t = 0$)}

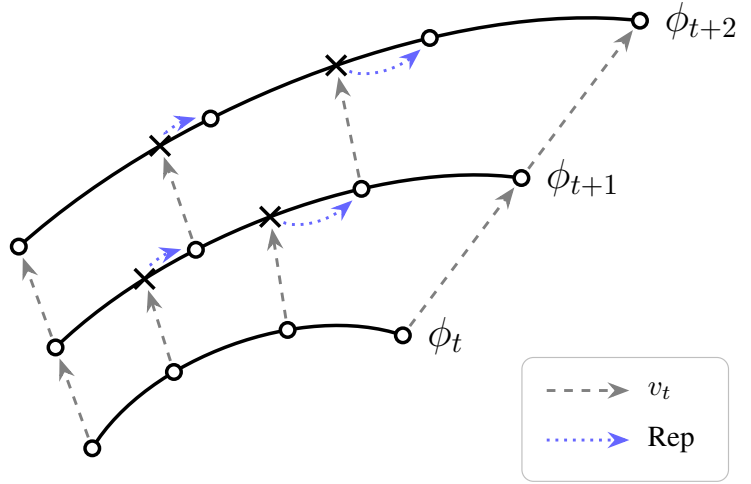
\begin{wrapfigure}{r}{0.6\linewidth}
    \centering
    \resizebox{\linewidth}{!}{\input{string_fig.tex}}
    \caption{\textbf{The string method.} Grey dashed arrows show Step 1 (evolution): each image moves according to $v_t$, landing at positions marked with $\times$. Blue dotted arrows show Step 2 (reparametrization): images are redistributed to restore equal arc-length spacing along the string.}
    \label{fig:string}
\end{wrapfigure}
The simplest choice sets $v_t = b_t$, the learned velocity field:
\begin{equation}
    \dot{\phi}_t(s) = b_t(\phi_t(s)) + \lambda_t(s) \partial_s \phi_t(s).
    \label{eq:pure_transport}
\end{equation}
Starting from a curve of noise samples at $t=0$, this produces at $t=1$ a continuous path morphing between generated images. For variance-preserving schedules where $\rho_0 = \mathcal{N}(0, I)$, typical noise samples have norm approximately $\sqrt{d}$. Given two such samples $z_0$ and $z_1$, a natural initialization is
\begin{equation}
    \phi_0(s) = z_0 \cos(\pi s/2) + z_1 \sin(\pi s/2),
    \label{eq:sphere_init}
\end{equation}
which traces an approximate geodesic on this sphere; a reparametrization step may be applied to enforce $|\partial_s \phi_0(s)| = \text{const}$ exactly. Alternatively, to morph between two specific data samples $x_A$ and $x_B$, we can integrate the probability flow backward from $t=1$ to $t=0$ to obtain $z_0$ and $z_1$ and use these as endpoints---this is the approach taken in our experiments.

Pure transport produces visually appealing morphs, but the resulting path is simply the image of the initial string under the flow. While individual images remain typical samples of $\rho_t$, the path as a curve is not intrinsically defined---it depends entirely on the initialization. To obtain paths with a principled geometric meaning, we must incorporate the score.

\subsubsection{Minimum Energy Paths ($\gamma_t \gg 1$, $T=0$)}

Adding the score term to the velocity connects the string to the energy landscape $V_t = -\log \rho_t$:
\begin{equation}
    v_t = b_t + \gamma_t^2 s_t = b_t - \gamma_t^2 \nabla V_t.
    \label{eq:mep_velocity}
\end{equation}
In the limit $\gamma_t \to \infty$, the score term dominates and the string relaxes rapidly toward high-density regions of $\rho_t$. Since this relaxation is much faster than the evolution of $V_t$ itself, the string effectively sees a quasi-static energy landscape at each time, recovering the classical string method and computing \emph{minimum energy paths} (MEPs).

\begin{propbox}
\begin{definition}[Minimum Energy Path]
A curve $\phi^*: [0,1] \to \mathbb{R}^d$ is an MEP if the energy gradient (and hence the score) is everywhere tangent to the path:
\begin{equation*}
    [\nabla V(\phi^*(s))]^\perp = -[s(\phi^*(s))]^\perp = 0, \quad \forall s \in [0,1],
\end{equation*}
where $[\cdot]^\perp$ denotes the component perpendicular to the tangent $\partial_s \phi^*$.
\end{definition}
\end{propbox}
Since in our setting $V_t = -\log \rho_t$, minimizing energy is equivalent to maximizing likelihood: MEPs are \emph{maximum likelihood paths} connecting two endpoints.

MEPs are geometrically well-defined: they pass through saddle points, identify transition states, and characterize barrier heights. However, in high dimensions, a fundamental problem emerges.

\begin{pinkbox}
\paragraph{The likelihood-realism paradox.} MEPs maximize likelihood along the path, but in high dimensions, high likelihood does not imply high probability mass. Probability mass depends on both the density $\rho(x)$ and the volume of nearby configuration space; the latter is an entropic contribution that dominates in high dimensions. Regions where the product of density and volume is maximized form the \emph{typical set}---where samples actually concentrate.\footnotemark{} As a result, MEPs traverse modes, which lie outside the typical set.
\end{pinkbox}
\footnotetext{For example, the density of a $d$-dimensional Gaussian $\mathcal{N}(0,I)$ is maximal at the origin, but typical samples lie on a sphere of radius $r \approx \sqrt{d}$.}

This explains the ``cartoon'' phenomenon \citep{guth2025learning, karczewski2025diffusion}: likelihood-maximizing images appear simplified and unrealistic. Our experiments confirm this---as $\gamma$ increases, MEPs traverse higher-likelihood regions with increasingly cartoonish intermediates (Figure~\ref{fig:beaver_lambda}).

To find paths connecting samples that remain within the typical set, we must account for entropy. Principal curves, discussed next, achieve this.

\subsubsection{Principal Curves ($\gamma_t \gg 1$, $T > 0$)}

Principal curves were introduced by \citep{hastie1989principal} to find structure in unstructured data.

\begin{propbox}
\begin{definition}[Principal Curve]
A curve $\phi^*: [0,1] \to \mathbb{R}^d$ is a principal curve for a distribution $\rho$ if it is self-consistent: each point equals the conditional expectation of points that project onto it:
\begin{equation*}
    \phi^*(s) = \mathbb{E}_\rho[X \mid s^*(X) = s], \quad \forall s \in [0,1],
\end{equation*}
where $s^*(X) = \arg\min_{s'} \|X - \phi^*(s')\|$ is the projection of $X$ onto the curve.
\end{definition}
\end{propbox}

\begin{wrapfigure}[21]{r}{0.4\linewidth}
    \centering
    \includegraphics[width=\linewidth]{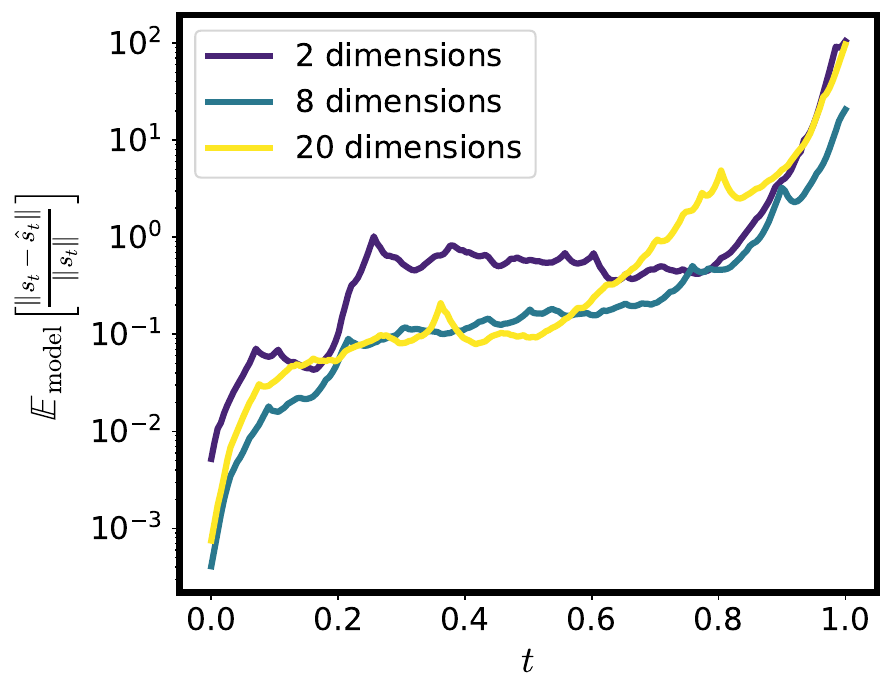}
\caption{\textbf{Relative score estimation error:} $\mathbb{E}_{\text{model}}[|s_t - \hat{s}_t|/|s_t|]$ as a function of $t$ for a mixture of Gaussians in various dimensions. The error increases sharply near $t=1$, motivating the quenching of $\gamma_t$ as $t \to 1$. For details see Appendix~\ref{app:testing}}
\label{fig:score_quality}
\end{wrapfigure}

In our setting, we take $\rho = \rho_T \propto \rho_1^{1/T}$, where $\rho_1$ is the data distribution and $T \in [0,1]$ controls the energy-entropy balance, and we fix the endpoints $\phi^*(0) = x_A$ and $\phi^*(1) = x_B$. At $T = 1$, we obtain principal curves for $\rho_1$ connecting the two samples through the typical set; as $T \to 0$, $\rho_T$ concentrates near modes and the principal curve converges to an MEP. Our experiments confirm that increasing $T$ yields increasingly realistic intermediates (Figure~\ref{fig:beaver_temperature}).

To compute principal curves, we discretize them into $N+1$ images $\{\phi_t^{(i)}\}_{i=0}^N$, where the projection regions become Voronoi cells $\mathcal{V}_i = \{x : \|x - \phi^{(i)}\| < \|x - \phi^{(j)}\| \text{ for all } j \neq i\}$. We associate to each image a walker $x_t^{(i)}$ that samples its Voronoi cell via the full SDE:
\begin{equation}
    dx_t^{(i)} = b_t(x_t^{(i)})\, dt + \gamma_t^2 s_t(x_t^{(i)})\, dt + \sqrt{2T}\gamma_t\, dW_t,
    \label{eq:walker_sde}
\end{equation}
where integration uses timesteps $\Delta t = O(\gamma_t^{-2})$, and we let each walker drag its string image toward the running average of its position. This effectively defines the velocity $v_t$ of the string. Walkers must also remain closer to their associated string image than to any other; moves violating this constraint are rejected, enforcing the Voronoi restriction. Since $b_t$ and $s_t$ are time-dependent, we impose a separation of timescales via $\gamma_t \gg 1$: the walkers equilibrate within their Voronoi cells much faster than the landscape $V_t$ evolves, so the string tracks an approximate principal curve at each instant.

The complete finite-temperature string method iterates the following steps for $t \in [0,1]$:

\textbf{Step 1: Walker evolution.} Evolve each walker $x_t^{(i)}$ according to \eqref{eq:walker_sde}, rejecting moves that violate the minimum-distance criterion.

\textbf{Step 2: String update.} Update each string image via EMA: $\phi_{t+\Delta t}^{(i)} = (1-\eta)\phi_t^{(i)} + \eta\, x_{t+\Delta t}^{(i)}$, where $\eta \in (0,1]$ controls the averaging timescale.

\textbf{Step 3: Reparametrization.} Redistribute string images to equal arc-length spacing.

\begin{remark}
One might ask why not compute principal curves directly on a pregenerated dataset or the original training data. While this could work for curves that remain within high-density regions, we are primarily interested in principal curves connecting metastable states---such as distinct protein conformations or image modes. These transition regions are precisely where data points are scarce. The string method addresses this by using the learned score to sample locally along the curve, even in low-density regions. This also highlights the importance of the boundary conditions in our setting: we seek principal curves connecting two specified endpoints, which differs from Hastie's original formulation where the curve is unconstrained.
\end{remark}

\subsection{Summary and Implementation}
\label{sec:algorithm}

Table~\ref{tab:regimes} compares the three regimes in which our framework can operate: pure transport gives appealing but geometrically unmotivated morphs; MEPs provide geometric grounding but fail in high dimensions; principal curves combine geometric meaning with realistic outputs. Algorithm~\ref{alg:string_method} provides the complete procedure for computing strings in any of these regimes.

\begin{table}
\caption{Three regimes of string dynamics.}
\label{tab:regimes}
\centering
\small
\begin{tabular}{@{}lccc@{}}
\toprule
& \textbf{Transport} & \textbf{MEP} & \textbf{Principal curve} \\
& $\gamma_t = 0$ & $\gamma_t \gg1$ & $\gamma_t \gg1$ \\
&  & $T=0$ & $T > 0$ \\
\midrule
Geometric & No & Yes & Yes \\
Entropic & No & No & Yes \\
Realistic & Yes & No & Yes \\
\bottomrule
\end{tabular}
\end{table}

\begin{algorithm}
\caption{Diffusion String Method}
\label{alg:string_method}
\begin{algorithmic}[1]
\REQUIRE Data samples $x_A, x_B$; velocity $b_t$; score $s_t$
\REQUIRE Parameters $\gamma_t$, $T \in [0,1]$; initial time $t_0$; number of images $N$; timestep $\Delta t = O(\gamma_t^{-2})$
\STATE $z_0, z_1 \gets$ integrate $\dot{x}_t = b_t(x_t)$ from $t=1$ to $t=0$ starting at $x_A, x_B$
\STATE Initialize: $\phi_{t_0}^{(i)} = z_0 \cos(\pi i / 2N) + z_1 \sin(\pi i / 2N)$ for $i = 0, \ldots, N$
\STATE Reparametrize $\{\phi_{t_0}^{(i)}\}$ to equal arc-length spacing
\WHILE{$t < 1$}
    \FOR{$i = 0$ to $N$}
        \STATE Evolve $\phi_t^{(i)}$ (and walker $x_t^{(i)}$ if $T > 0$) according to chosen regime
    \ENDFOR
    \STATE Reparametrize to equal arc-length spacing
\ENDWHILE
\end{algorithmic}
\textbf{Output:} String $\{\phi_1^{(i)}\}_{i=0}^N$ connecting $x_A$ to $x_B$
\end{algorithm}

\paragraph{Quenching near $t=1$.} When using $\gamma_t >0$ (for example to compute MEPs or principal curves), we quench $\gamma_t \to 0$ as $t \to 1$ to avoid amplifying errors in the estimated score near the data distribution (Figure~\ref{fig:score_quality}). This ensures that interior images arrive accurately at $t = 1$. The endpoints always evolve with pure transport ($\gamma_t = 0$), so they return exactly to $x_A$ and $x_B$.

\paragraph{Computational cost.} The three regimes differ in expense. Pure transport ($\gamma_t = 0$) requires only forward integration of $b_t$ with moderate timesteps. MEPs and principal curves ($\gamma_t \gg 1$) require smaller timesteps $\Delta t = O(\gamma_t^{-2})$ for stability, which dominates the computational cost. In practice, $N = 50$--$70$ images, and for principal curves $\eta = 0.1$--$0.5$, provide a good balance between path resolution and cost. Our focus is on interpretability rather than speed: the method provides a tool for analyzing the geometry of pretrained diffusion models, requiring only access to $b_t$ and $s_t$ without any retraining. Our experiments demonstrate that the approach is practical for realistic applications.

\begin{figure*}
    \centering
    \includegraphics[width=\linewidth]{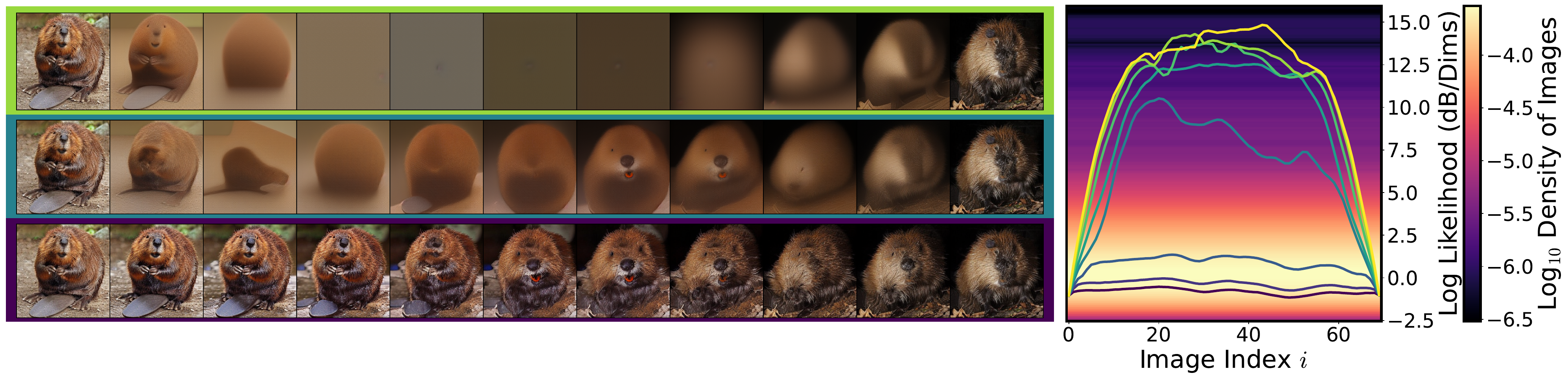}
    \caption{\textbf{Effect of score weight $\gamma$.} Left: string realizations for $\gamma$ ranging from 15 (\textcolor{mygreen}{top}) to 2 (\textcolor{myblue}{middle}) to $10^{-2}$ (\textcolor{mypurple}{bottom}) in logarithmic steps (factor of $\sqrt{10}$). Higher $\gamma$ drives paths through abstract, high-likelihood modes; lower $\gamma$ preserves realism. Right: log-likelihood of images along each string (colored curves), overlaid on the likelihood distribution of ImageNet validation images (heatmap; see Figure~\ref{fig:likelihood_hist} for details). The intermediate images along the MEP reach likelihoods far exceeding typical images.}
    \label{fig:beaver_lambda}
\end{figure*}
\begin{figure*}
    \centering
    \includegraphics[width=\linewidth]{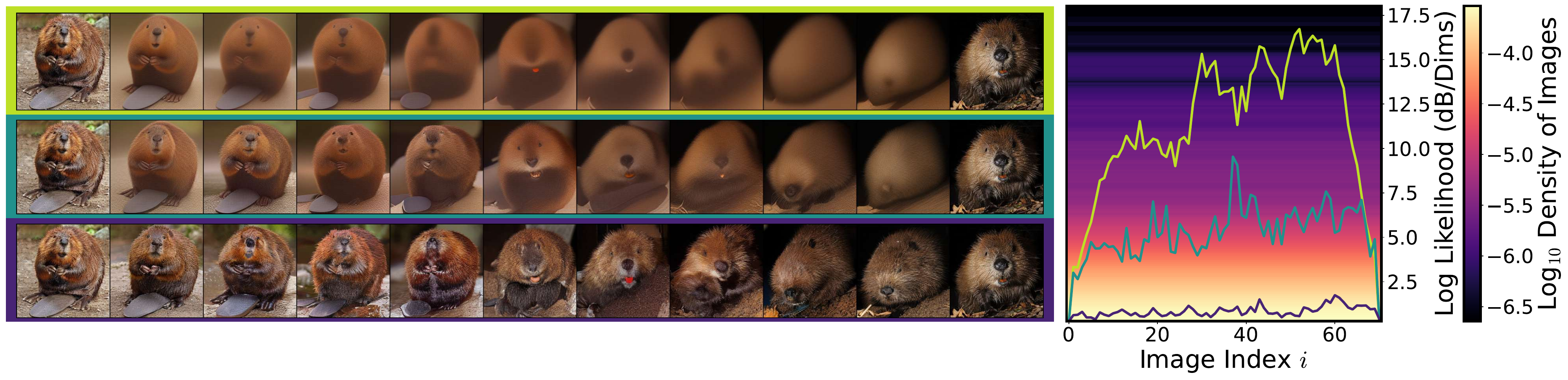}
    \caption{\textbf{Effect of temperature $T$.} Left: string realizations for $T$ ranging from 0.1 (\textcolor{mygreen}{top}) to 0.5 (\textcolor{myblue}{middle}) to 0.9 (\textcolor{mypurple}{bottom}). Lower $T$ drives paths through cartoon-like, high-likelihood regions; higher $T$ produces realistic samples. Right: log-likelihood of images along each string (colored curves), overlaid on the likelihood distribution of ImageNet validation images (heatmap; see Figure~\ref{fig:likelihood_hist}). As $T$ increases, the likelihood of the intermediates images decreases toward typical values, and images become more realistic.}
    \label{fig:beaver_temperature}
\end{figure*}

\section{Experiments}
\label{sec:expriments}

We demonstrate the string method on two domains: ImageNet for the likelihood-realism paradox, and proteins for conformational transitions.

\subsection{Images: The Three Regimes on ImageNet}
\label{sec:imagenet}

We apply the string method to ImageNet ($256 \times 256$) using the SiT-XL-2-256 model \citep{ma2024sit}. Images are encoded to a $4 \times 32 \times 32$ latent space via a VAE. Details of the model architecture are given in Appendix~\ref{app:sit}. Additional image pathways are shown in Appendix~\ref{app:add_exp}.

\paragraph{Setup.} Given two images, we: (1) encode into the Gaussian latent space by backward ODE to $t = 0$, (2) initialize a discrete string of $N = 71$ points along a spherical geodesic Eq~\eqref{eq:sphere_init}, and (3) evolve the string according to Eq~\eqref{eq:walker_sde} with score weight $\gamma_t = \gamma$ for $0.1 \ge t \ge 0.95$ and $\gamma_t = 0$ otherwise, where $\gamma$ is a tunable parameter described below. This schedule prevents the string from drifting away from the typical set (i.e., the sphere in Gaussian latent space) near $t = 0$, while avoiding regions where the score approximation degrades as $t \to 1$ (Figure~\ref{fig:score_quality}).

\paragraph{Effect of score weight $\gamma$ at $T=0$.} Figure~\ref{fig:beaver_lambda} visualizes strings interpolating between two beaver images for varying $\gamma$ and $T=0$. Full strings are size 71, but we show only of subsample of 11 (1 image every 6).

With a \emph{large} score weight ($\gamma = 15$, \textcolor{mygreen}{top row}), the string is driven toward high-likelihood regions, yielding intermediates that lie close to the MEP. Notably, the maximum-likelihood intermediate is a highly abstract, nearly single-color image, confirming that \textbf{the MEP passes through likelihood maxima that lie outside the typical set and, as a result, are perceptually unrealistic}.

\begin{wrapfigure}[26]{r}{0.50\linewidth}
    \centering
    \includegraphics[width=0.9\linewidth]{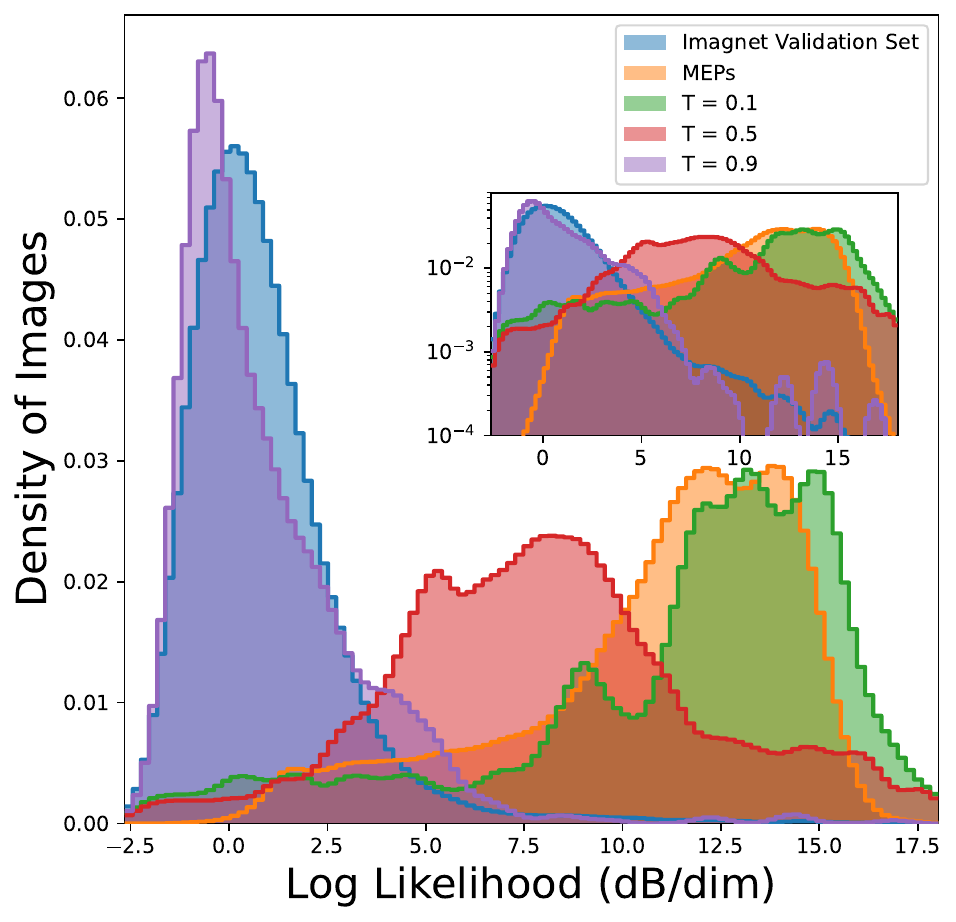}
\caption{\textbf{Likelihood distributions.} Histogram of log-likelihoods for ImageNet $256 \times 256$ validation images (blue), compared with images along MEPs (orange) and finite-temperature strings at $T = 0.1$ (green), $T = 0.5$ (red), and $T = 0.9$ (purple), aggregated across multiple strings. MEP intermediates have significantly higher likelihood than real images, confirming they lie outside the typical set. As temperature increases, string images approach the likelihood distribution of real data. Inset: same data on log scale.}
\label{fig:likelihood_hist}
\end{wrapfigure}

With a \emph{moderate} score weight ($\gamma = 2$, \textcolor{myblue}{middle row}), intermediates become \emph{cartoon-like}, consistent with prior observations in diffusion-based interpolation \citep{guth2025learning, karczewski2025diffusion}. With \emph{small} score guidance ($\gamma = 10^{-2}$, \textcolor{mypurple}{bottom row}), the string produces visually plausible intermediates throughout.

The left panel reports log-likelihood along each string: for large $\gamma$, the trajectory passes through a pronounced likelihood peak. The gap between typical-image likelihoods and this maximum provides a rough estimate of the data-manifold diameter in likelihood space.

\paragraph{Effect of temperature $T$.} Figure~\ref{fig:beaver_temperature} illustrates the finite-temperature method for varying $T$ when $\gamma=7$.

At \emph{low temperature} ($T = 0.1$, \textcolor{mygreen}{first row}), results closely resemble those of the zero-temperature MEP string method, recovering cartoon-like images. At \emph{moderate temperature} ($T = 0.5$, \textcolor{myblue}{second row}), intermediates exhibit slightly more detail but remain cartoon-like. At \emph{high temperature} ($T \approx 1$, \textcolor{mypurple}{final row}), \textbf{the principal curve passes through realistic-looking intermediates that balance energy and entropy.} Additional examples illustrating the effect of temperature can be found in Appendix~\ref{app:add_exp}.

\paragraph{Principal curves.} Figure~\ref{fig:goose-stacked} shows a principal curve between two images from the \emph{goose} class, computed using the finite-temperature method with $T = 0.9$ and $\gamma = 7$. The first row displays the principal curve itself: each image is obtained by EMA from its associated walker. The remaining three rows show the walkers at $t = 1$ that were used to compute the EMA. Upon close inspection, one can observe subtle differences between walkers---an effect of entropy---which average out to produce the smoother images along the principal curve.

\paragraph{Likelihood Distributions} In Figure \ref{fig:likelihood_hist}, we present the histogram of log-likelihood values for the ImageNet validation set, alongside the log-likelihoods of images obtained from a large collection of strings generated using the algorithms described above. The distribution corresponding to the MEP exhibits pronounced peaks at substantially higher likelihoods than those observed for the validation set. Moreover, the finite-temperature strings display peaks whose locations shift systematically with temperature: as the temperature increases, the peak likelihood decreases, indicating a negative correlation between temperature and likelihood, consistent with theoretical expectations.

\begin{figure*}[htbp]
  \centering
  \text{Principal curve}\par
  \includegraphics[width=\linewidth]{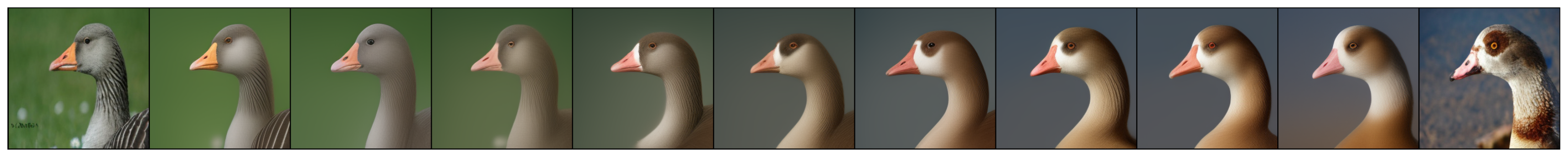}
  \par
  \text{Walkers}\par 
  \vspace{3px}
  \includegraphics[width=\linewidth]{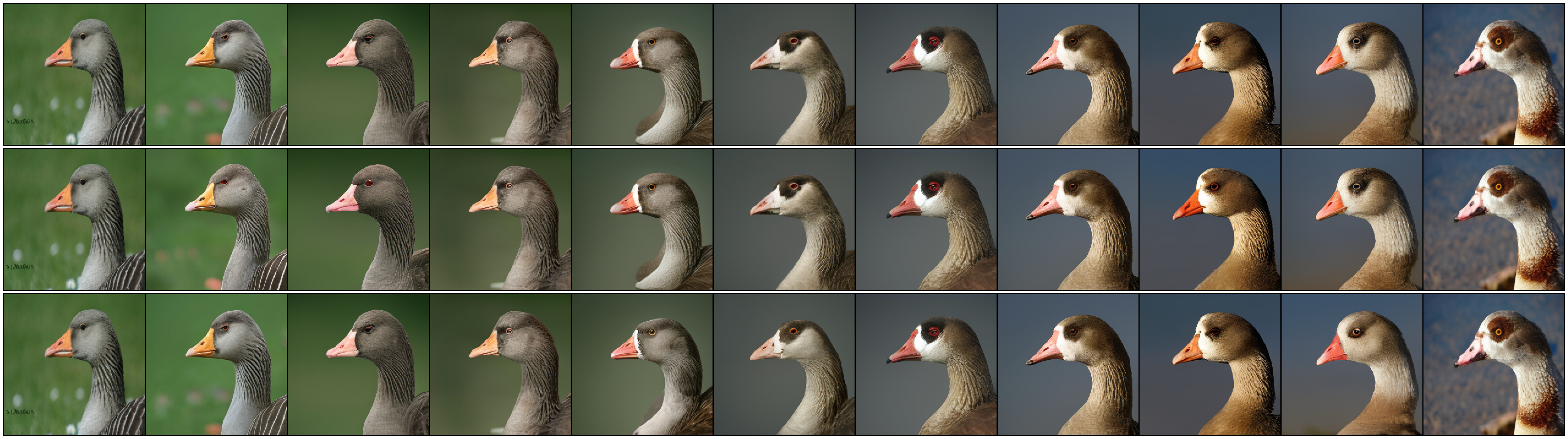}
  \caption{\textbf{Principal curve for images from the \emph{goose} class.} Top row: images along the principal curve, computed as the EMA of associated walkers ($T = 0.9$, $\gamma = 7$). Bottom three rows: individual walkers at $t = 1$. The walkers exhibit subtle variations due to entropic effects (best seen when zoomed in); averaging produces the smoother images in the principal curve.}
  \label{fig:goose-stacked}
\end{figure*}

\subsection{Proteins: Conformational Transitions}

We apply the string method to predict transition pathways between protein conformations using two diffusion models. One, called DiG \citep{zheng2023digmultimod}, operates in SE(3) and it has been trained on experimental structures up to December 2020. The other, ScoreMD \citep{ScoreMD2026}, operates in $\mathbb{R}^3$ and includes a Fokker--Planck regularization term during training to improve score estimation near $t = 1$.

\paragraph{Motivation.} Proteins fluctuate between metastable conformations, but experiments typically capture only static snapshots. Transition pathways---the sequence of intermediate structures connecting conformers---are crucial for understanding function but are rarely observed directly. Computational methods such as molecular dynamics can in principle reveal these pathways, but the timescales involved often exceed what is computationally accessible \citep{shaw2010atomic}. We demonstrate that our framework, applied to a pretrained generative model, can predict plausible transition pathways by computing principal curves with the finite-temperature string method. These pathways connect endpoint conformations while remaining within the typical set of the learned distribution, balancing likelihood and entropy.

\begin{figure*}
    \centering
    \includegraphics[width=0.95\linewidth]{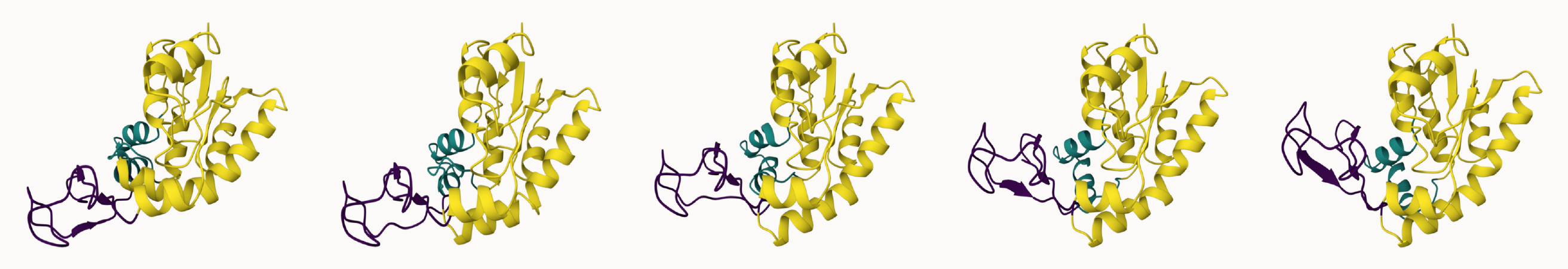}
    \caption{\textbf{Adenylate Kinase transition pathway.} Pathway between the open (4AKE) and closed (1AKE) conformations computed using DiG. Intermediate structures maintain physical plausibility (secondary structure preservation, no steric clashes).}
    \label{fig:protein_pathways_ake}
\end{figure*}

\begin{figure*}[h]
    \centering
    \includegraphics[width=0.90\linewidth]{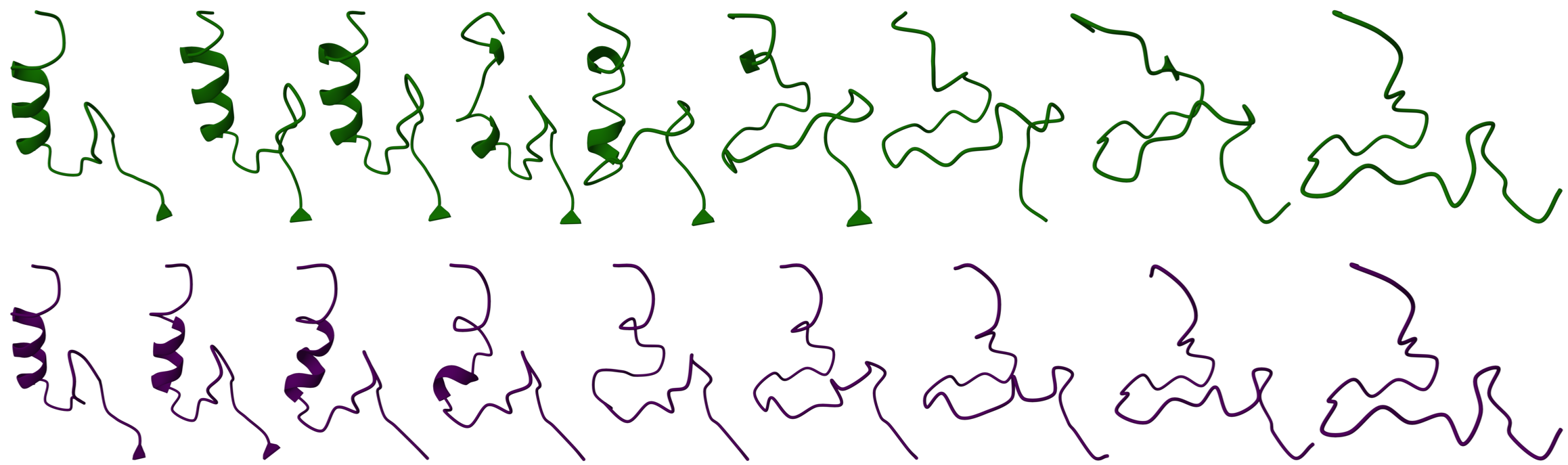}
    \includegraphics[width=0.95\linewidth]{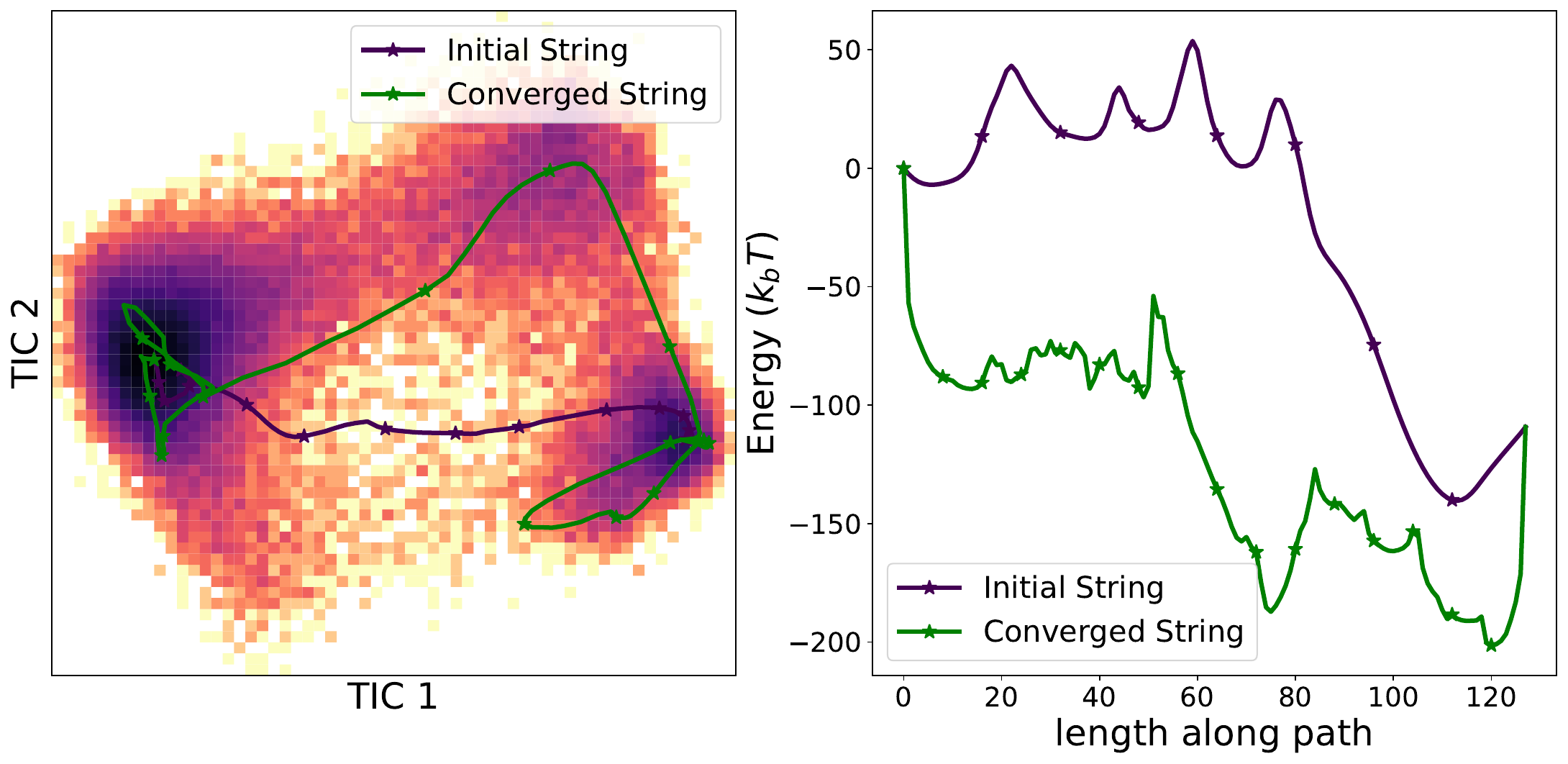}
    \caption{\textbf{BBA folding pathway.} Top: structures along the initial string obtained by pure transport ($\gamma = 0$). Bottom: structures along the converged MEP. Both show progressive formation of secondary structure. Left panel: initial string (purple) and MEP (green) projected onto the first two TIC components, overlaid on a free-energy landscape estimated from ScoreMD samples; darker regions indicate lower free energy. Asterisks mark equal arc-length intervals. Right panel: energy profile along each pathway. The MEP achieves significantly lower energy than the initial string.}
    \label{fig:protein_TICA_bba}
\end{figure*}
\begin{figure*}[h]
    \centering
    \includegraphics[width=0.90\linewidth]{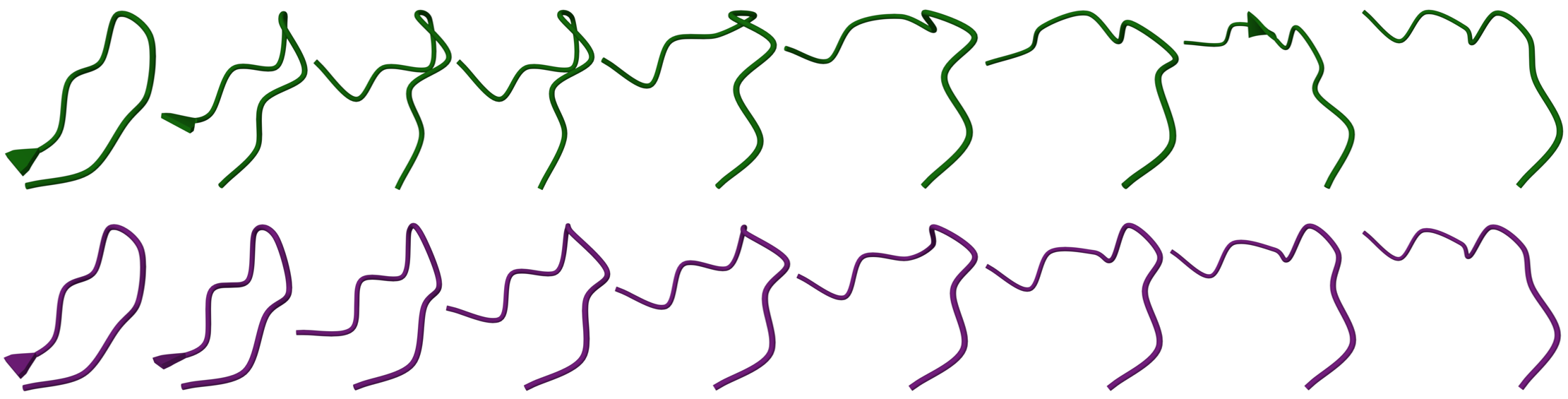}
    \includegraphics[width=0.95\linewidth]{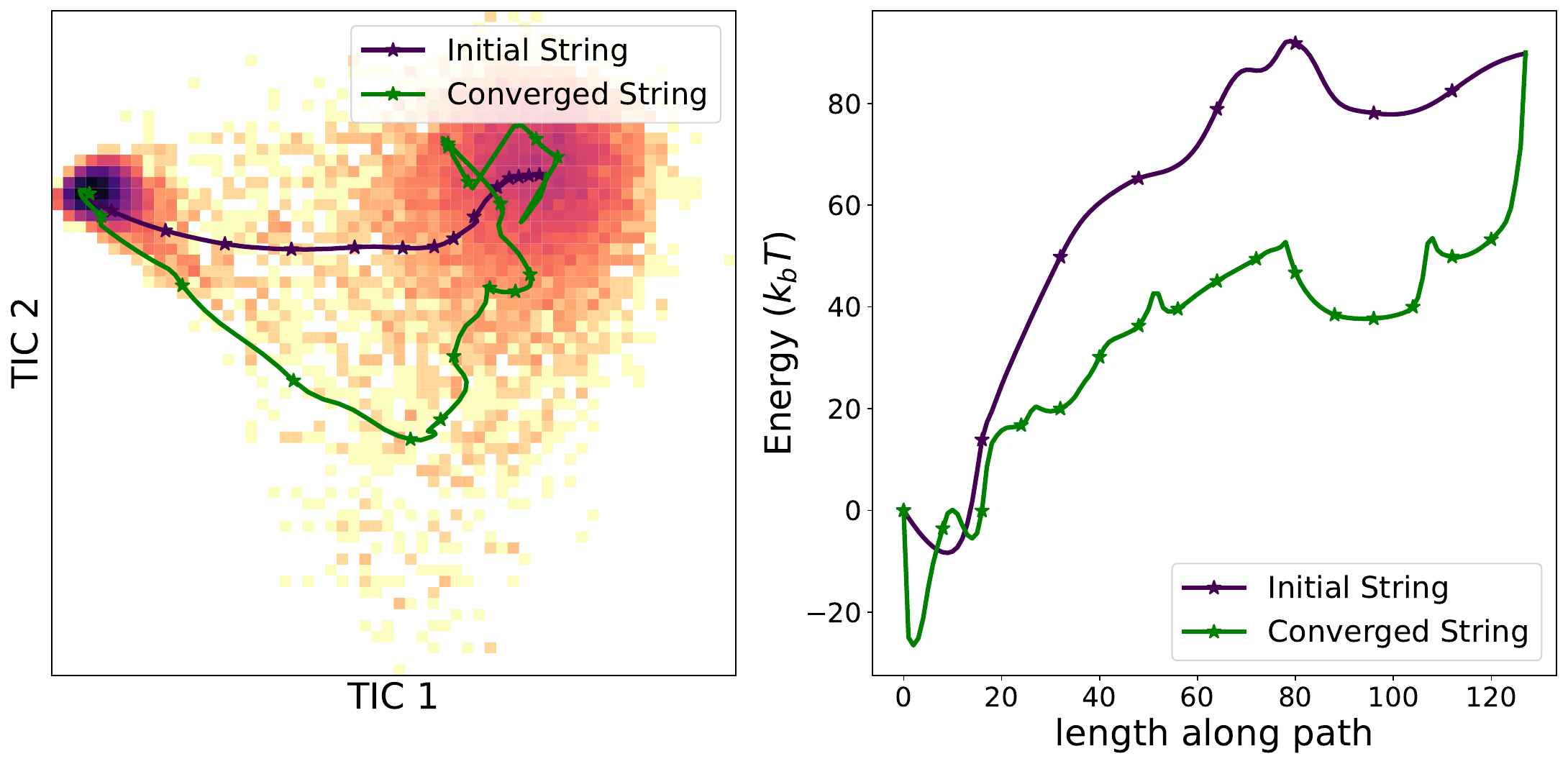}
    \caption{\textbf{Chignolin folding pathway.} Top: structures along the initial string obtained by pure transport ($\gamma = 0$). Bottom: structures along the converged MEP. Both show progressive formation of secondary structure. Left panel: initial string (purple) and MEP (green) projected onto the first two TIC components, overlaid on a free-energy landscape estimated from ScoreMD samples; darker regions indicate lower free energy. Asterisks mark equal arc-length intervals. Right panel: energy profile along each pathway. The MEP achieves significantly lower energy than the initial string.}
    \label{fig:protein_TICA_chignolin}
\end{figure*}

\paragraph{Method.} Given two conformations of the same protein, we apply the string method in the SE(3)-equivariant space of the model. The score function is derived from the model's denoising objective. Details are given in Appendices~\ref{app:scoreMD} and~\ref{app:dig}.

\paragraph{Adenylate Kinase.} Adenylate kinase (AdK) is a phosphotransferase enzyme that undergoes a large-scale conformational change between open and closed states during catalysis \citep{muller1996adenylate}. This transition has been extensively studied as a model system for understanding protein dynamics \citep{arora2007large, beckstein2009zipping}. Using DiG, we computed the pathway between the open (PDB: 4AKE) and closed (PDB: 1AKE) conformations (Figure~\ref{fig:protein_pathways_ake}). The intermediates preserve secondary structure elements and avoid steric clashes, suggesting physically plausible transitions despite the model being trained only on static structures.

\paragraph{BBA and Chignolin.} BBA and Chignolin are small protein domains widely used as model systems for protein folding due to their rapid folding kinetics and simple topologies \citep{shaw2011fastfolding}. Using ScoreMD \citep{ScoreMD2026}, we computed minimum energy paths (MEPs) connecting extended and folded conformations for both proteins.

For ScoreMD, the score function is accurately learned near $t = 1$, allowing us to apply the classical string method at a fixed time to compute MEPs. However, the high dimensionality and rough landscape make optimization sensitive to initialization. We therefore use as initialization a string obtained by pure transport ($\gamma = 0$).

Figures~\ref{fig:protein_TICA_bba} and \ref{fig:protein_TICA_chignolin} show the computed pathways projected onto the first two time-independent components (TICs) \citep{liu2008desolvation}. Left panels display the MEP (yellow) and initial string (purple) projected onto TIC coordinates and overlaid on a free-energy landscape, where free energy is estimated as $-\log \rho$ from a histogram of i.i.d.\ samples generated by ScoreMD; darker regions indicate lower free energy. The MEP is driven toward lower free-energy regions compared to the initial string. Some path segments may appear to overlap due to projection from high dimensions; asterisks mark equal arc-length intervals to clarify the geometry.

Right panels show the free energy profile along each pathway, relative to the starting conformation. The initial string has significantly higher free energy than the converged MEP, demonstrating that the string relaxation is essential. Because the dimensionality is substantially lower than in the image setting, the MEPs remain physically realistic rather than cartoonish.

Above the two panels for each protein, we include a three-dimensional rendering of its folding pathway; the protein is colored consistently with the corresponding pathway to facilitate visual correspondence.

\section{Conclusion}

We adapted the string method to probe the geometry of learned distributions using score functions from pretrained generative models. By varying the dynamics---pure transport, gradient-dominated, or finite-temperature---we reveal complementary aspects of the distribution landscape.

Our key finding is that accounting for entropy is essential in high dimensions. Minimum energy paths traverse high-likelihood but low-probability regions, producing cartoonish artifacts. Principal curves, which balance energy against entropy, yield realistic transitions with stronger theoretical grounding. These results confirm and explain prior observations that high-likelihood samples from diffusion models appear unrealistic \citep{guth2025learning, karczewski2025diffusion}: the phenomenon is not a model defect but a consequence of concentration of measure, and can be resolved by accounting for entropy.

This establishes the string method as a tool for analyzing generative models beyond sampling: identifying modes, characterizing barriers, and mapping connectivity in complex learned distributions. Importantly, the method requires no retraining---only access to the learned velocity and score fields.

\paragraph{Limitations.} The computed pathways are only as good as the underlying generative model. If the model assigns low probability to physically relevant transition states, the string method cannot recover them. Similarly, errors in score estimation---particularly near the data distribution---propagate to the computed paths, motivating our quenching strategy.

\paragraph{Future directions.} Several extensions are natural. Scaling to larger models (e.g., text-to-image diffusion) would test whether the likelihood-realism tradeoff persists across architectures. Theoretical analysis of convergence rates and approximation error would strengthen the foundations. Beyond images and proteins, the method applies wherever transition pathways matter: molecular design, robotics, and latent space exploration in multimodal models. Finally, combining string methods with conditional generation could enable targeted pathway computation---for instance, finding transitions that pass through specified intermediate states.


\section*{Impact Statement}

This paper presents work whose goal is to advance the field of Machine Learning. There are many potential societal consequences of our work, none which we feel must be specifically highlighted here.

\bibliography{reference.bib}
\bibliographystyle{unsrt}
\newpage
\appendix
\onecolumn

\section{Background on Diffusion Models} \label{app:diff}

\subsection{Velocity and Score: Definitions and Estimators}

In the stochastic interpolant framework, we construct a time-dependent density $\rho_t$ via the interpolant $I_t = \alpha_t x_0 + \beta_t x_1$ with $x_0 \sim \mathcal{N}(0, I)$ and $x_1 \sim \rho_1$. Common choices include:
\begin{itemize}
    \item Linear: $\alpha_t = 1-t$, $\beta_t = t$
    \item Trigonometric: $\alpha_t = \cos(\tfrac{\pi t}{2})$, $\beta_t = \sin(\tfrac{\pi t}{2})$ (variance-preserving)
    \item $\alpha_t = \sqrt{1-t^2}$, $\beta_t = t$ (variance-preserving; time-rescaled Ornstein-Uhlenbeck)
\end{itemize}
The velocity field and score are defined as:
\begin{align}
    b_t(x) &= \mathbb{E}[\dot{I}_t \mid I_t = x], \\
    s_t(x) &= \nabla \log \rho_t(x).
\end{align}
Both can be estimated by regression: $b_t$ by minimizing $\mathbb{E}[|\dot{I}_t - b_t(I_t)|^2]$, and $s_t$ via denoising score matching using the conditional Gaussian structure $I_t \mid x_1 \sim \mathcal{N}(\beta_t x_1, \alpha_t^2 I)$.
The two are related by:
\begin{equation}
    s_t(x) = \frac{\beta_t b_t(x) - \dot{\beta}_t x}{\alpha_t(\dot{\alpha}_t \beta_t - \alpha_t \dot{\beta}_t)}.
\end{equation}

\subsection{The General SDE and Fokker-Planck Equation}

The general SDE used in our framework is:
\begin{equation}
    dx_t = b_t(x_t)\,dt + \gamma_t^2 s_t(x_t)\,dt + \sqrt{2}\gamma_t\,dW_t.
\end{equation}
The corresponding Fokker-Planck equation for the density is:
\begin{equation}
    \partial_t \rho_t + \nabla \cdot \left[ (b_t + \gamma_t^2 s_t) \rho_t \right] = \gamma_t^2 \Delta \rho_t.
\end{equation}
Substituting $s_t = \nabla \log \rho_t$, one can verify that $\rho_t$ is preserved for any $\gamma_t \geq 0$: the score correction $\gamma_t^2 s_t \rho_t = \gamma_t^2 \nabla \rho_t$ and the diffusion term $\gamma_t^2 \Delta \rho_t$ cancel in the flux, leaving the evolution determined by $b_t$ alone. This is a local detailed balance condition at each time $t$; since $b_t$ and $s_t$ are time-dependent, the overall dynamics describe nonequilibrium sampling from $\rho_0$ to $\rho_1$.

\subsection{Likelihood Computation via the Probability Flow ODE}
\label{app:like}

Setting $\gamma_t = 0$ in the general SDE gives the probability flow ODE
\begin{equation}
    \dot{x}_t = b_t(x_t).
\end{equation}
The corresponding transport equation for the density is
\begin{equation}
    \partial_t \rho_t + \nabla \cdot (b_t \rho_t) = 0.
\end{equation}
This can be solved by the method of characteristics. Let $x_t$ be a solution of the ODE starting from $x_0$. Differentiating $\rho_t(x_t)$ along the flow we deduce
\begin{equation}
    \frac{d}{dt} \rho_t(x_t) = \partial_t \rho_t(x_t) + \nabla \rho_t(x_t) \cdot \dot{x}_t = \partial_t \rho_t(x_t) + \nabla \rho_t(x_t) \cdot b_t(x_t).
\end{equation}
Using the transport equation $\partial_t \rho_t = -\nabla \cdot (b_t \rho_t) = -b_t \cdot \nabla \rho_t - \rho_t \nabla \cdot b_t$, we obtain
\begin{equation}
    \frac{d}{dt} \rho_t(x_t) = -\rho_t(x_t) \nabla \cdot b_t(x_t).
\end{equation}
This is a linear ODE in $\rho_t(x_t)$, with solution
\begin{equation}
    \rho_t(x_t) = \rho_0(x_0) \exp\left( -\int_0^t \nabla \cdot b_\tau(x_\tau) \, d\tau \right).
\end{equation}
Taking logarithms gives the log-likelihood
\begin{equation}
    \log \rho_t(x_t) = \log \rho_0(x_0) - \int_0^t \nabla \cdot b_\tau(x_\tau) \, d\tau.
    \label{eq:log_likelihood}
\end{equation}
In practice, given a data sample $x_1 \sim \rho_1$, we integrate the ODE backward from $t=1$ to $t=0$ to obtain $x_0$, and accumulate the divergence $\nabla \cdot b_t$ along the trajectory. Since $\rho_0 = \mathcal{N}(0, I)$, the term $\log \rho_0(x_0)$ is simply $-\frac{1}{2}\|x_0\|^2 - \frac{d}{2}\log(2\pi)$.

\section{String Method: Derivation and Implementation}
\label{app:string}

\subsection{Continuous Evolution}

We derive the continuous string evolution equation from the constraint that arc-length parametrization is preserved.

\begin{proposition}
Let $\phi_t: [0,1] \to \mathbb{R}^d$ be a curve evolving under a velocity field $v_t$. Assume that the endpoints evolve as independent points:
\begin{equation}
    \dot{\phi}_t(0) = v_t(\phi_t(0)), \quad \dot{\phi}_t(1) = v_t(\phi_t(1)).
\end{equation}
Then, for $s \in (0,1)$, the evolution
\begin{equation}
    \dot{\phi}_t(s) = v_t(\phi_t(s)) + \lambda_t(s) \partial_s \phi_t(s)
\end{equation}
preserves the arc-length parametrization $|\partial_s \phi_t(s)| = L(t)$ (constant in $s$) for an appropriate choice of $\lambda_t(s)$ with $\lambda_t(0) = \lambda_t(1) = 0$.
\end{proposition}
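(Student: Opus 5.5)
The plan is to derive an explicit equation for $\lambda_t$ from the requirement that $\partial_t |\partial_s\phi_t|^2$ be independent of $s$, and then solve it as a second-order ODE in $s$. Assume the string starts arc-length parametrized, with $|\partial_s\phi_t(s)| = L(t) > 0$. We want $\partial_s\big(\partial_t |\partial_s \phi_t|^2\big) = 0$ for all later times.

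\textbf{Step 1: the evolution of the metric.} Write $\tau_t(s) = \partial_s\phi_t(s)/|\partial_s\phi_t(s)|$ for the unit tangent. Differentiating the evolution equation in $s$ gives
\[
\partial_s\dot\phi_t = \partial_s\big[v_t(\phi_t)\big] + \partial_s\lambda_t\,\partial_s\phi_t + \lambda_t\,\partial_s^2\phi_t .
\]
Taking the inner product with $\partial_s\phi_t$ yields
\[
\tfrac12\partial_t|\partial_s\phi_t|^2 = \partial_s\phi_t\cdot \partial_s[v_t(\phi_t)] + \partial_s\lambda_t\,|\partial_s\phi_t|^2 + \tfrac12\lambda_t\,\partial_s|\partial_s\phi_t|^2 .
\]
Along an arc-length parametrized curve the last term vanishes. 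The condition $\partial_t|\partial_s\phi_t| = \dot L(t)$, which is independent of $s$, then becomes
\[
\partial_s\lambda_t(s) = \frac{\dot L(t)}{L(t)} - \frac{1}{L(t)}\,\tau_t(s)\cdot\partial_s[v_t(\phi_t(s))].
\]

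\textbf{Step 2: solve for $\lambda_t$ and fix $\dot L$ with the boundary conditions.} Integrate from $0$ with $\lambda_t(0)=0$:
\[
\lambda_t(s) = s\,\frac{\dot L}{L} - \frac1L\int_0^s \tau_t\cdot\partial_{s'}[v_t(\phi_t)]\,ds' .
\]
Imposing $\lambda_t(1)=0$ determines
\[
\dot L(t) = \int_0^1 \tau_t\cdot \partial_s[v_t(\phi_t)]\,ds .
\]
This is exactly the rate of change of the length $L=\int_0^1|\partial_s\phi_t|\,ds$ produced by the normal-plus-tangential motion $v_t$, since tangential reparametrization does not change length. It is the natural consistency check, and it is compatible with the endpoints moving as $\dot\phi_t = v_t$.

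Equivalently, one can write $\lambda_t$ as the solution of the two-point problem $\partial_s^2\lambda_t = -L^{-1}\partial_s\big(\tau_t\cdot\partial_s[v_t(\phi_t)]\big)$ with zero Dirichlet data. This has a unique solution, which makes the phrase ``an appropriate choice'' concrete.

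\textbf{Step 3: close the argument.} With this $\lambda_t$, the quantity $g = |\partial_s\phi_t|^2$ satisfies a linear transport equation
\[
\partial_t g = \lambda_t\,\partial_s g + (\text{terms that make } g \text{ uniform}).
\]
More precisely, writing $g = L^2 + h$, the perturbation $h$ obeys a homogeneous linear first-order equation with $h(0)=0$. Uniqueness then gives $h\equiv0$ for all $t$, so the parametrization stays arc-length for as long as the solution is smooth and $L>0$.

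\textbf{Main obstacle.} The delicate part is this self-consistency in Step 3. The formula for $\lambda_t$ in Step 1 was derived under the assumption that the curve is already arc-length parametrized, so one must show the constraint is propagated rather than assumed. I would handle this by keeping the $\tfrac12\lambda_t\partial_s g$ term and making the general choice
\[
\partial_s\lambda_t = \frac{\dot L}{L} - \frac{\tau_t\cdot\partial_s[v_t(\phi_t)]}{|\partial_s\phi_t|} .
\]
Then $g$ satisfies
\[
\partial_t g - \lambda_t\partial_s g = 2g\big(\dot L/L\big) + 2\,\partial_s\phi_t\cdot\partial_s[v_t(\phi_t)]\big(1 - |\partial_s\phi_t|/L\big),
\]
which admits $g\equiv L^2$ as a solution. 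Uniqueness for linear hyperbolic equations along the characteristics of $-\lambda_t$ then yields the claim. No inflow boundary data are needed, since $\lambda_t$ vanishes at both endpoints.

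I would state smoothness of $v_t$ and $\phi_t$ and nondegeneracy $L>0$ as standing assumptions.
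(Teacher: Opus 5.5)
Your proposal is correct and follows the same route as the paper: differentiate the constraint $\partial_s|\partial_s\phi_t|^2=0$ in time, substitute the evolution equation, and obtain a two-point problem for $\lambda_t$ with zero Dirichlet data; you go further by making $\lambda_t$ and $\dot L$ explicit and by supplying the propagation argument (Step~3), which the paper leaves implicit. One harmless slip: with your ``general choice'' $\partial_s\lambda_t=\dot L/L-\tau_t\cdot\partial_s[v_t(\phi_t)]/|\partial_s\phi_t|$ the $v_t$-terms cancel exactly and give $\partial_t(g/L^2)-\lambda_t\,\partial_s(g/L^2)=0$, whereas the equation you display with the factor $(1-|\partial_s\phi_t|/L)$ is the one for your Step~1 choice with $1/L$; in either case $g\equiv L^2$ follows by uniqueness along the characteristics of $-\lambda_t$.
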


\begin{proof}
The arc-length parametrization requires $|\partial_s \phi_t|^2$ to be constant in $s$, i.e., $\langle \partial_s \phi_t, \partial_s^2 \phi_t \rangle = 0$ for all $s$ and $t$. Taking the time derivative:
\begin{equation}
    \partial_t \langle \partial_s \phi_t, \partial_s^2 \phi_t \rangle = \langle \partial_s \dot{\phi}_t, \partial_s^2 \phi_t \rangle + \langle \partial_s \phi_t, \partial_s^2 \dot{\phi}_t \rangle = 0.
\end{equation}
Substituting $\dot{\phi}_t = v_t(\phi_t) + \lambda_t \partial_s \phi_t$ and requiring this to hold for all $s$ determines $\lambda_t(s)$. The boundary conditions $\lambda_t(0) = \lambda_t(1) = 0$ are consistent with the endpoints evolving according to $v_t$.
\end{proof}

In practice, we enforce the constraint via discrete reparametrization rather than computing $\lambda_t$ explicitly. The derivation above applies to $\mathbb{R}^d$ with the Euclidean metric; for Riemannian manifolds such as $SO(3)$, we work directly with the discrete algorithm.

\subsection{Reparametrization on $\mathbb{R}^d$} \label{sec:reparam}

The discrete reparametrization step redistributes points $\{\phi^{(i)}\}_{i=0}^N$ to equal arc-length spacing:
\begin{enumerate}
    \item Compute cumulative arc-lengths: $L_0 = 0$, $L_i = L_{i-1} + |\phi^{(i)} - \phi^{(i-1)}|$.
    \item Normalize: $\alpha_i = L_i / L_N \in [0,1]$.
    \item Fit a cubic spline through $(\alpha_i, \phi^{(i)})$.
    \item Evaluate at uniform positions: $\phi^{(i)}_{\text{new}} = \text{spline}(i/N)$.
\end{enumerate}

\subsection{Reparametrization on $SO(3)$}

On $SO(3)$, reparametrization requires converting between the rotation matrix representation $\{\phi^{(i)}_M\}_{i=0}^N$ and the axis-angle vector representation $\{\phi^{(i)}_V\}_{i=0}^N$. To reparametrize into $K+1$ equally spaced points $\{\psi^{(j)}\}_{j=0}^K$:
\begin{enumerate}
    \item Compute the incremental rotations: $s^{(i)}_M = \phi^{(i)}_M (\phi^{(i-1)}_M)^{-1}$ for $1 \le i \le N$.
    \item Compute cumulative arc-lengths in the axis-angle representation: $L_0 = 0$, $L_i = L_{i-1} + |s^{(i)}_V|$.
    \item Normalize: $\alpha_i = L_i / L_N \in [0,1]$.
    \item For each $0 \le j \le K$, find the preceding index $p(j) = \max\{i : \alpha_i \le j/K\}$.
    \item Compute interpolated displacements: $d^{(j)}_V = s^{(p(j)+1)}_V \cdot \frac{j/K - \alpha_{p(j)}}{\alpha_{p(j)+1} - \alpha_{p(j)}}$.
    \item Set $\psi^{(0)}_M = \phi^{(0)}_M$, $\psi^{(K)}_M = \phi^{(N)}_M$, and for $1 \le j \le K-1$: $\psi^{(j)}_M = d^{(j)}_M \phi^{(p(j))}_M$.
\end{enumerate}

\subsection{Reparametrization on $SE(3)$}

The group $SE(3)$ is the semidirect product of $\mathbb{R}^3$ and $SO(3)$, hence every element $\phi \in SE(3)$ can be written as $\phi = (t,R)$ with $t \in \mathbb{R}^3$ and $R \in SO(3)$. There is not a criterion to choose a norm in $SE(3)$, in this paper we chose to have $|\phi| = \sqrt{|t|^2 + |R_V|^2}$ where $|R_V|$ is the norm of the vector in the axis angle representation. With this choice for the norm we do the reparameterization as follows:

\begin{enumerate}
    \item Compute the incremental rotations: $s^{(i)}_M = R^{(i)}_M (R^{(i-1)}_M)^{-1}$ for $1 \le i \le N$.
    \item Compute the incremental translations: $q^{(i)} = t^{(i)} - t^{(i-1)}$ for $1 \le i \le N$.
    \item Compute cumulative arc-lengths in with this norm: $L_0 = 0$, $L_i = L_{i-1} + \sqrt{|q|^2 + |s_V|^2}$.
    \item Normalize: $\alpha_i = L_i / L_N \in [0,1]$.
    \item For each $0 \le j \le K$, find the preceding index $p(j) = \max\{i : \alpha_i \le j/K\}$.
    \item Compute interpolated rotation displacements: $d^{(j)}_V = s^{(p(j)+1)}_V \cdot \frac{j/K - \alpha_{p(j)}}{\alpha_{p(j)+1} - \alpha_{p(j)}}$.
    \item Compute interpolated translation displacements: $y^{(j)} = q^{(p(j)+1)} \cdot \frac{j/K - \alpha_{p(j)}}{\alpha_{p(j)+1} - \alpha_{p(j)}}$.
    \item Set $\psi^{(0)}_M = \phi^{(0)}_M$, $\psi^{(K)}_M = \phi^{(N)}_M$, and for $1 \le j \le K-1$: $\psi^{(j)}_M = (t^{p(j)} + y^{(j)},d^{(j)}_M \phi^{(p(j))}_M)$.
\end{enumerate}

\section{Testing the score reliability on Gaussian Mixtures}
\label{app:testing}

To examine how score estimation error varies along the generative process (i.e., as $t$ increases from 0 to 1), we trained MLPs to learn a stochastic interpolant transporting a Gaussian distribution to a mixture of two Gaussians in $d$ dimensions, for different values of $d$. The two modes in the mixture have means $(\pm 3.0, 0, \ldots, 0) \in \mathbb{R}^d$ and covariance matrices
\[
\begin{pmatrix}
    \frac{7.9}{4} & \pm\frac{6.7\sqrt{3}}{4} \\[4pt] \pm\frac{6.7\sqrt{3}}{4} & \frac{7.9}{4} 
\end{pmatrix}
\oplus I_{d-2}.
\]
where the sign corresponds to the sign of the mean.

We parametrized the score function using an MLP with 3 hidden layers of sizes 512, 1024, and 512. Each model was trained with batch size 1000 for $1500 \times d$ iterations. For inference, we used batch size $5000 \times d$ and 200 timesteps. For each value of $d$, we trained 10 models; Figure~\ref{fig:score_quality} shows the average relative score error $\mathbb{E}[|s_t - \hat{s}_t|/|s_t|]$ as a function of $t$.

\newpage
\section{Hyperparameters for SiT}
\label{app:sit}

 \begin{table}[!htbp]
 \label{tab:hparam:sit}
\centering
\caption{Hyperparameter choices for the SiT Model}
\setlength{\tabcolsep}{16mm
}
\begin{tabular}{ll}
\toprule[1.2pt]

\textbf{Hyperparameters} & \textbf{Values}\\ 
 \midrule[0.5pt]
 \multicolumn{2}{c}{\textbf{Neural network}} \\
 \midrule[0.5pt]
Depth & 28 \\
Hidden Size & 1152 \\
Patch Size & 2 \\
Number of Attention Heads & 16 \\
MLP ratio & 4.0 \\
Class Dropout Probability &  0.1 \\
Input Size &  32 \\
Input Channels & 4 \\

\bottomrule[1.2pt]
\end{tabular}

\label{tab:hyperparameters_SiT}
\end{table}

\section{Hyperparameters for DiG}
\label{app:dig}

\begin{table}[h]
    \centering
    \caption{Hyperparameters of the Distributional Graphormer Protein Model.}
    \begin{tabular}{c|c|c|c}
        \toprule
         Hyperparameter & Initialization & PIDP & Data Training \\
         \midrule
         Model depth & \multicolumn{3}{c}{12} \\
         Hidden dim (Single) & \multicolumn{3}{c}{768}\\
         Hidden dim (Pair) & \multicolumn{3}{c}{256} \\
         Hidden dim (Feed Forward) & \multicolumn{3}{c}{1024} \\
         Number of Heads & \multicolumn{3}{c}{32} \\
         \bottomrule
    \end{tabular}
    \label{tab:protein_hps}
\end{table}

\section{Hyperparameters for ScoreMD}
\label{app:scoreMD}

\begin{table}[h]
\centering
\caption{\textbf{Graph transformer model definitions} extracted from the configuration file.}
\label{tab:model_definitions}
\begin{tabular}{lccccc}
\hline
\textbf{Model name} & \textbf{\texttt{hidden\_nf}} & \textbf{\texttt{feature\_embedding\_dim}} & \textbf{\texttt{n\_layers}} & \textbf{\texttt{potential}} & \textbf{\texttt{dropout}} \\
\hline
\texttt{transformer\_large\_score}     & 128 & 16 & 3 & \texttt{false} & 0.0 \\
\texttt{transformer\_large\_potential} & 128 & 16 & 3 & \texttt{true}  & 0.0 \\
\hline
\end{tabular}
\end{table}

\begin{table}[h]
\centering
\caption{\textbf{Ranged models configuration} extracted from the configuration file of the model, it should be kept in mind that for this model $t=0$ is actually the interpolant time $t=1$ and vice versa.}
\label{tab:ranged_models}
\begin{tabular}{lll}
\hline
\textbf{Entry} & \textbf{Model reference} & \textbf{Range} \\
\hline
1 & \texttt{transformer\_large\_score}     & $[1.0,\;0.6]$ \\
2 & \texttt{transformer\_large\_score}     & $[0.6,\;0.1]$ \\
3 & \texttt{transformer\_large\_potential} & $[0.1,\;0.0]$ \\
\hline
\end{tabular}
\end{table}

\newpage

\section{Another model: ConfDiff}
\label{app:confdiff}

We also applied the model on another model: ConfDiff \citep{wang2024confdiff}, a model operating on $SO(3)$ trained on the Protein Data Bank up to December 2021.

\paragraph{WW Domain.} The WW domain is a small protein module that has served as a model system for studying protein folding due to its fast folding kinetics and simple topology \citep{jager2006structure}. Using ConfDiff, we computed the folding pathway from an extended to a folded conformation (Figure~\ref{fig:protein_pathways_wwdom}). The pathway reveals progressive formation of secondary structure, consistent with experimental observations of WW domain folding \citep{liu2008desolvation}.

\begin{figure*}[h]
    \centering
    \includegraphics[width=0.95\linewidth]{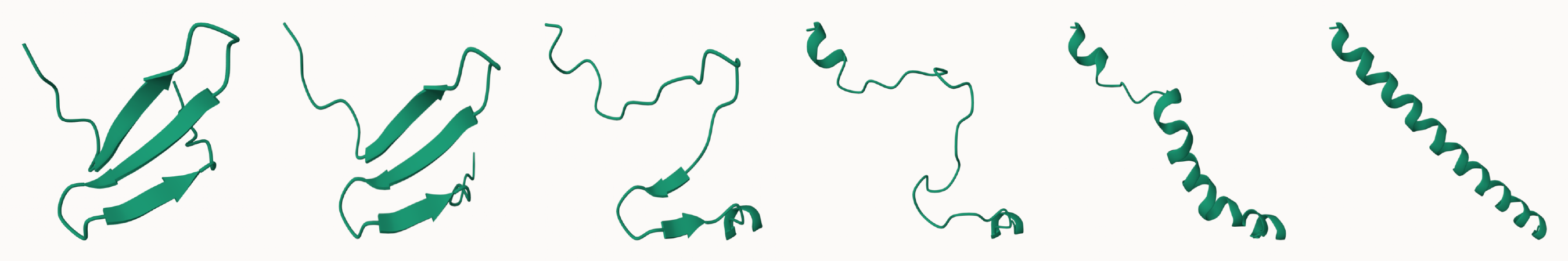}
    \caption{ Pathway from extended to folded conformation computed using ConfDiff. Intermediates show progressive formation of secondary structure.}
    \label{fig:protein_pathways_wwdom}
\end{figure*}

\subsection{Hyperparameters of the model}

 \begin{table}[!htbp]
 \label{tab:hparam}
\centering
\caption{Hyperparameter choices for ConfDiff Model}
\setlength{\tabcolsep}{16mm
}
\begin{tabular}{ll}
\toprule[1.2pt]

\textbf{Hyperparameters} & \textbf{Values}\\ 
 \midrule[0.5pt]
 \multicolumn{2}{c}{\textbf{Neural network}} \\
 \midrule[0.5pt]
Number of IPA blocks & 4 \\
Dimension of single repr. & 256 \\
Dimension of pairwise Repr.& 128 \\
Dimension of hidden & 256 \\
Number of IPA attention heads & 4 \\
Number of IPA query points &  8 \\
Number of IPA value points &  12 \\
Number of transformer attention heads&4 \\
Number of transformer layers&  2 \\

\bottomrule[1.2pt]
\end{tabular}

\label{tab:hyperparameters}
\end{table}

\newpage
\section{Additional Experiments} \label{app:add_exp}

\subsection{Effect of the Temperature in finite-temperature string method}

\begin{figure}[H]
    \centering
    \includegraphics[width=\linewidth]{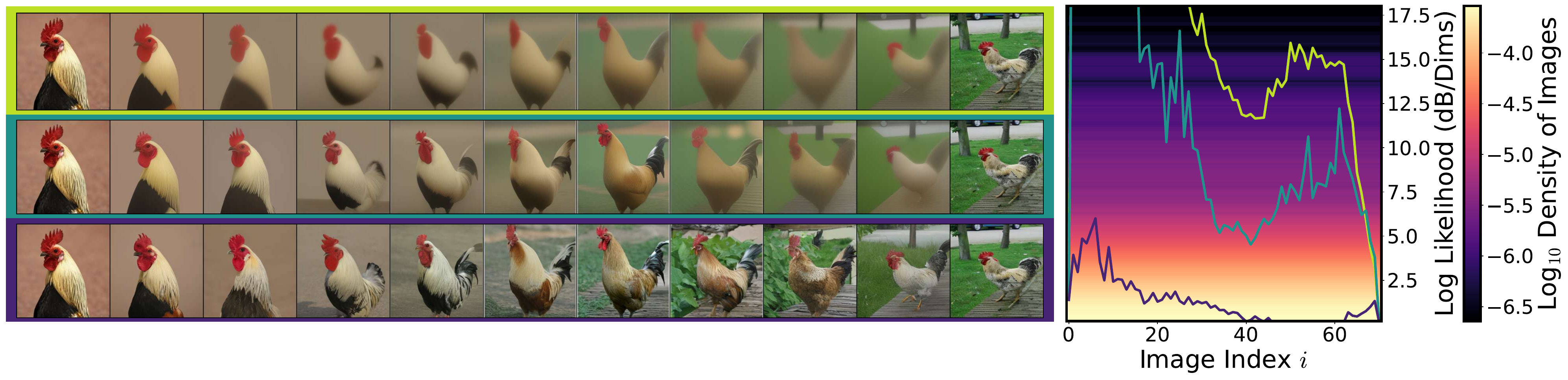}
    \caption{Effect of temperature $T$ on images principal curve. Lower $T$ drives paths through cartoonish high-likelihood modes (top); higher $T$ preserves realism (bottom).}
    \label{fig:hot_like_class_007}
\end{figure}

\begin{figure}[H]
    \centering
    \includegraphics[width=\linewidth]{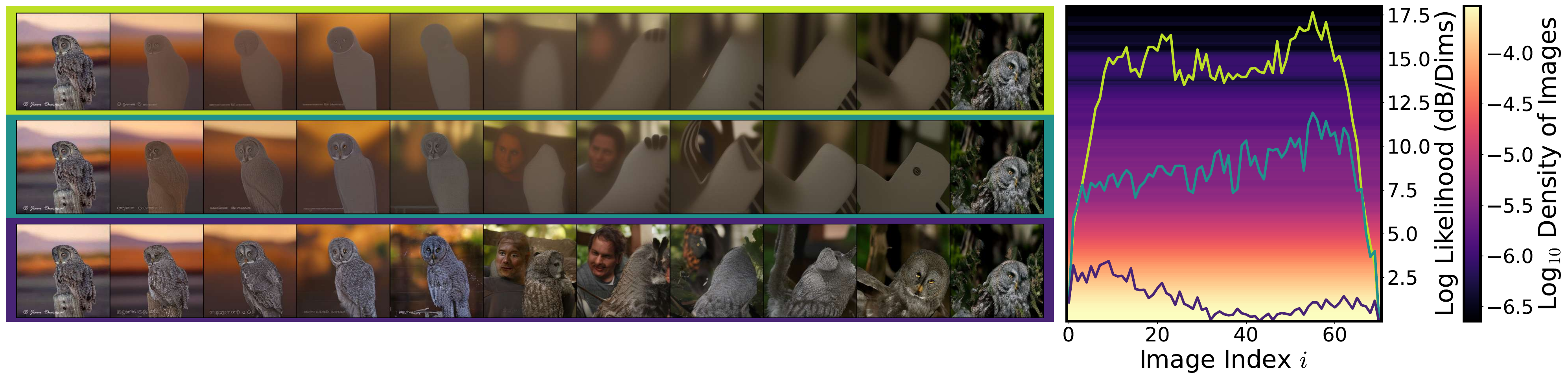}
    \caption{Effect of temperature $T$ on images principal curve. Lower $T$ drives paths through cartoonish high-likelihood modes (top); higher $T$ preserves realism (bottom).}
    \label{fig:hot_like_class_024}
\end{figure}

\begin{figure}[H]
    \centering
    \includegraphics[width=\linewidth]{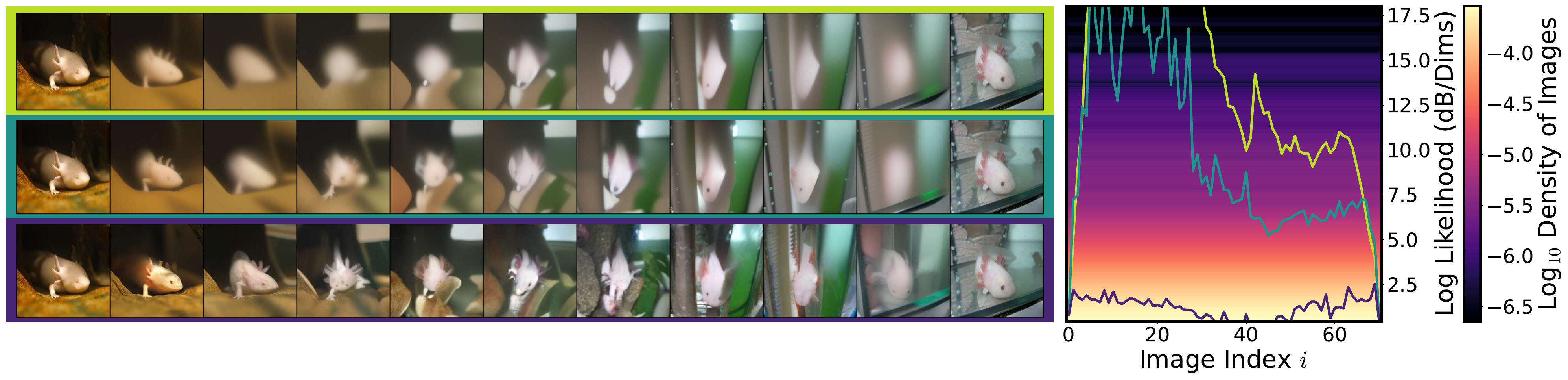}
    \caption{Effect of temperature $T$ on images principal curve. Lower $T$ drives paths through cartoonish high-likelihood modes (top); higher $T$ preserves realism (bottom).}
    \label{fig:hot_like_class_029}
\end{figure}

\begin{figure}[H]
    \centering
    \includegraphics[width=\linewidth]{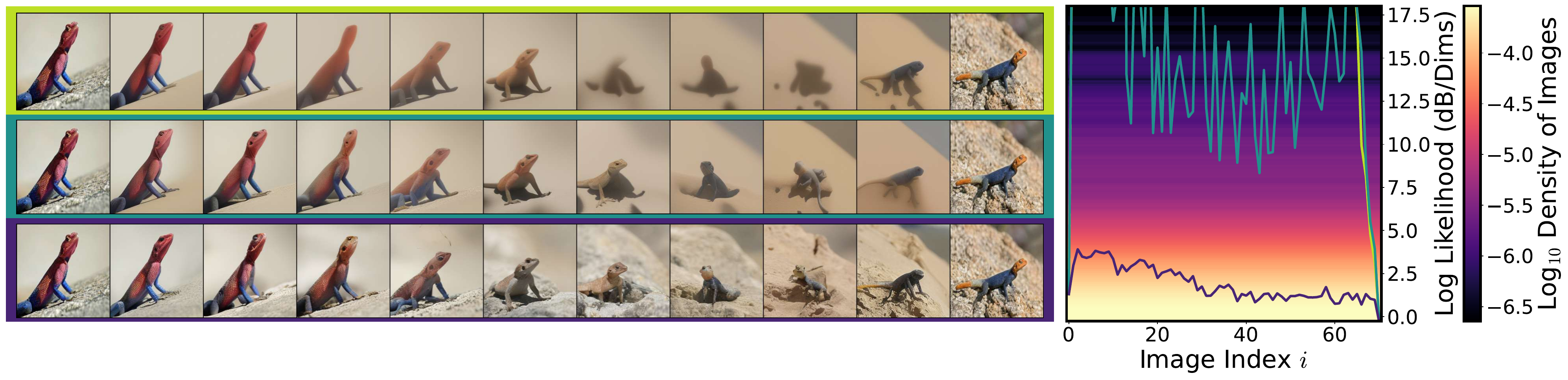}
    \caption{Effect of temperature $T$ on images principal curve. Lower $T$ drives paths through cartoonish high-likelihood modes (top); higher $T$ preserves realism (bottom).}
    \label{fig:hot_like_class_042}
\end{figure}

\begin{figure}[H]
    \centering
    \includegraphics[width=\linewidth]{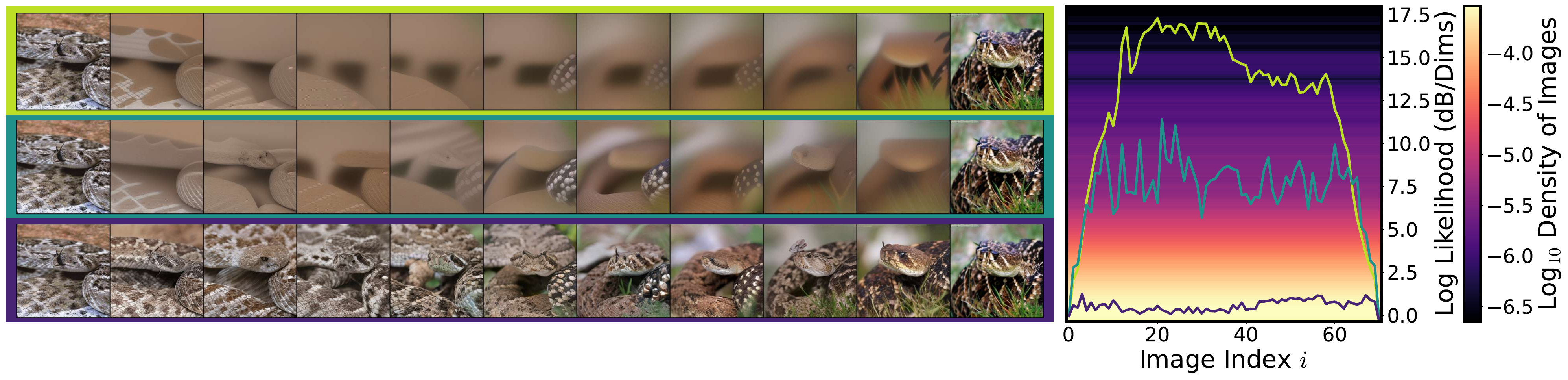}
    \caption{Effect of temperature $T$ on images principal curve. Lower $T$ drives paths through cartoonish high-likelihood modes (top); higher $T$ preserves realism (bottom).}
    \label{fig:hot_like_class_067}
\end{figure}

\begin{figure}[H]
    \centering
    \includegraphics[width=\linewidth]{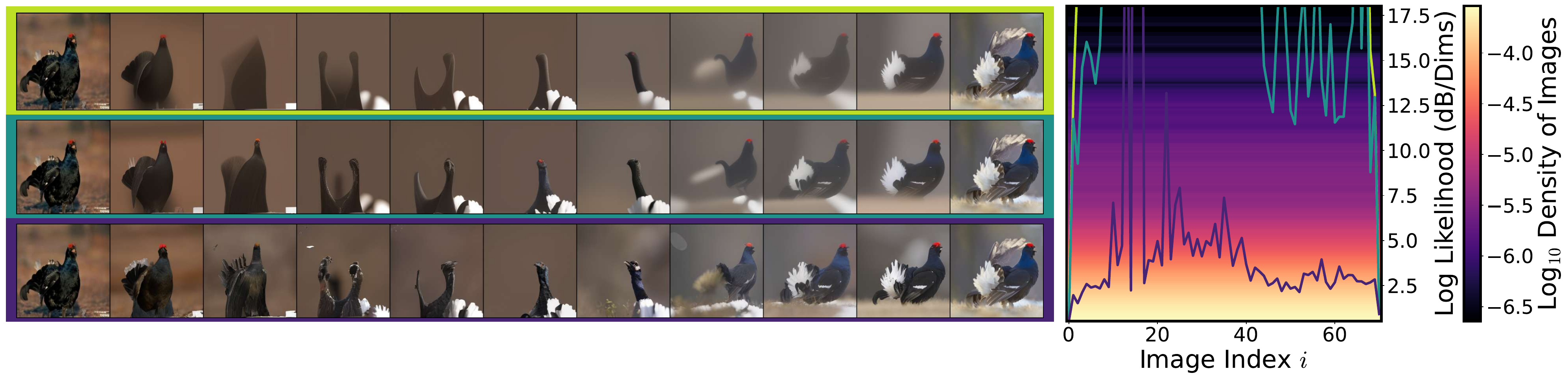}
    \caption{Effect of temperature $T$ on images principal curve. Lower $T$ drives paths through cartoonish high-likelihood modes (top); higher $T$ preserves realism (bottom).}
    \label{fig:hot_like_class_080}
\end{figure}

\begin{figure}[H]
    \centering
    \includegraphics[width=\linewidth]{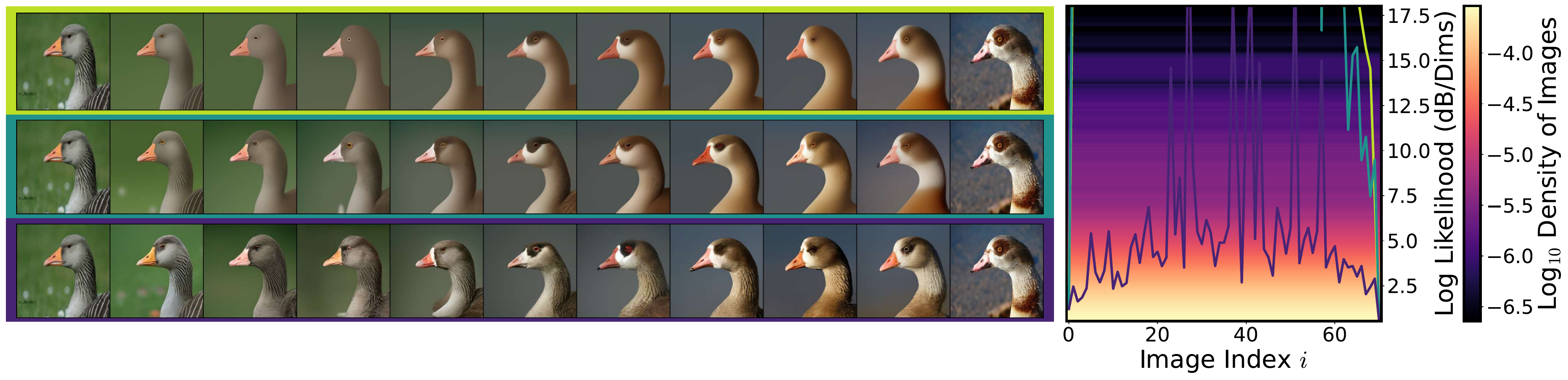}
    \caption{Effect of temperature $T$ on images principal curve. Lower $T$ drives paths through cartoonish high-likelihood modes (top); higher $T$ preserves realism (bottom).}
    \label{fig:hot_like_class_099}
\end{figure}

\begin{figure}[H]
    \centering
    \includegraphics[width=\linewidth]{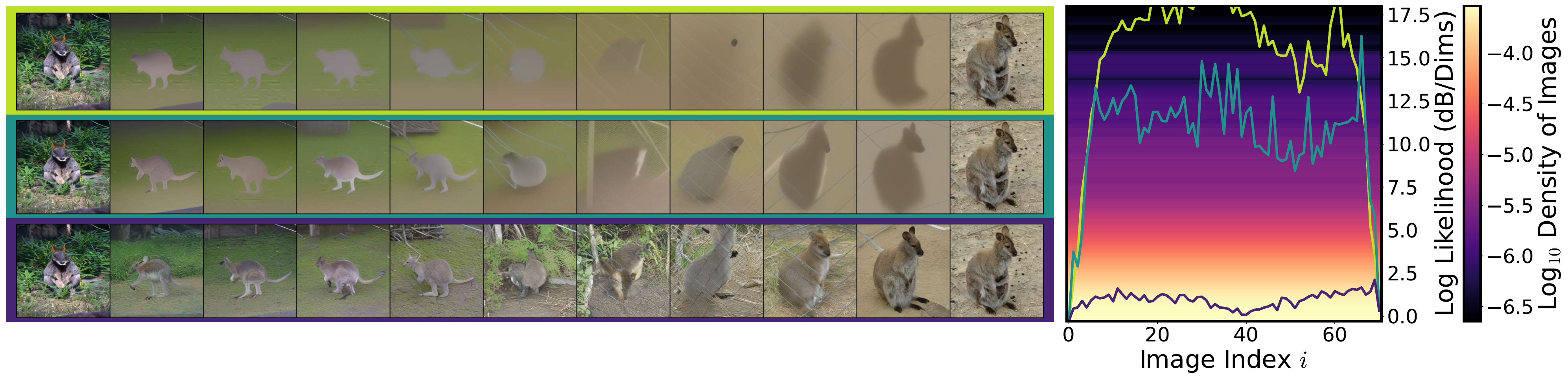}
    \caption{Effect of temperature $T$ on images principal curve. Lower $T$ drives paths through cartoonish high-likelihood modes (top); higher $T$ preserves realism (bottom).}
    \label{fig:hot_like_class_104}
\end{figure}

\begin{figure}[H]
    \centering
    \includegraphics[width=\linewidth]{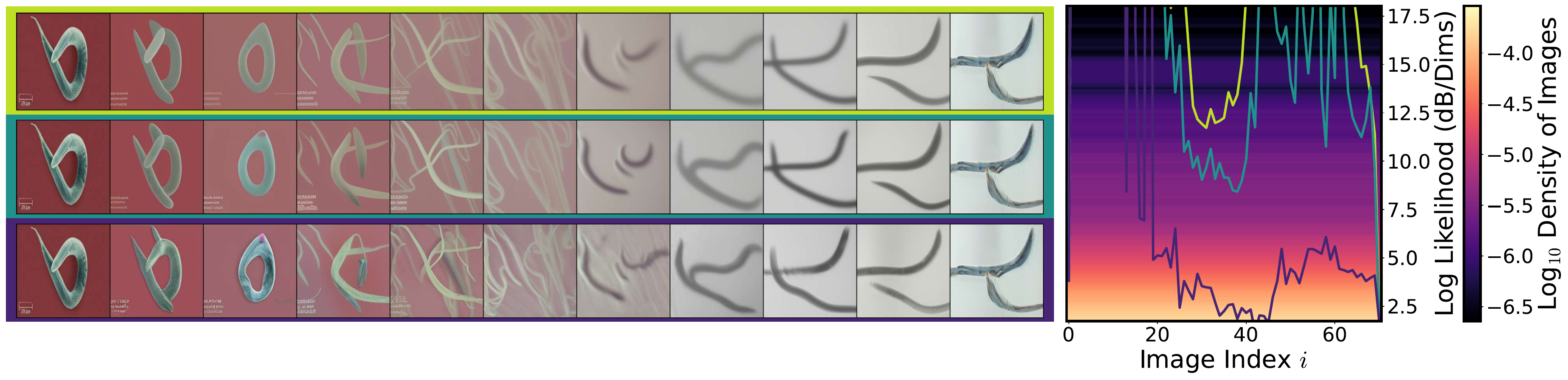}
    \caption{Effect of temperature $T$ on images principal curve. Lower $T$ drives paths through cartoonish high-likelihood modes (top); higher $T$ preserves realism (bottom).}
    \label{fig:hot_like_class_111}
\end{figure}

\begin{figure}[H]
    \centering
    \includegraphics[width=\linewidth]{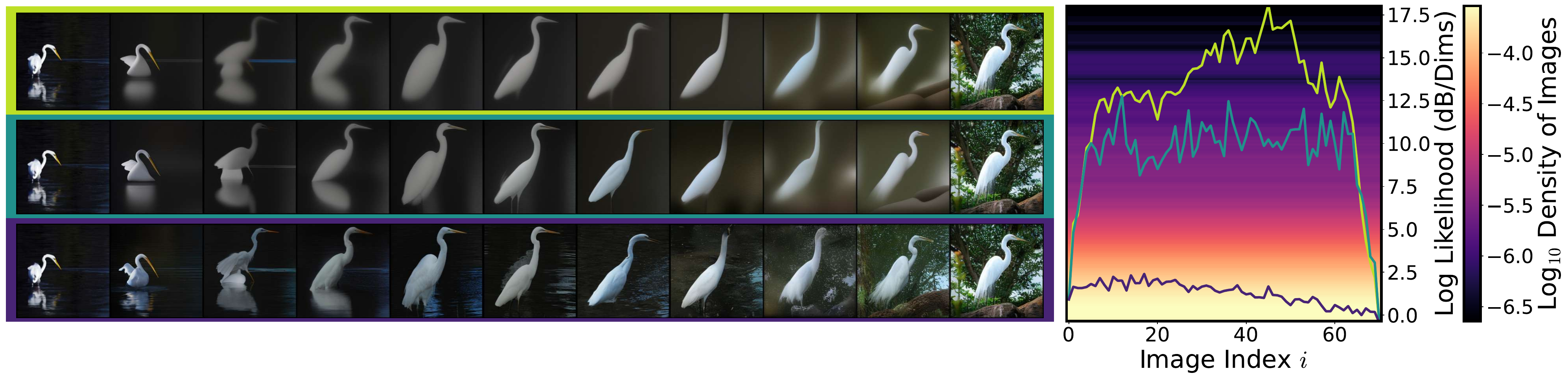}
    \caption{Effect of temperature $T$ on images principal curve. Lower $T$ drives paths through cartoonish high-likelihood modes (top); higher $T$ preserves realism (bottom).}
    \label{fig:hot_like_class_132}
\end{figure}

\subsection{Multiple realizations of a principle curves}

\begin{figure}[H]
    \centering
    \includegraphics[width=\linewidth]{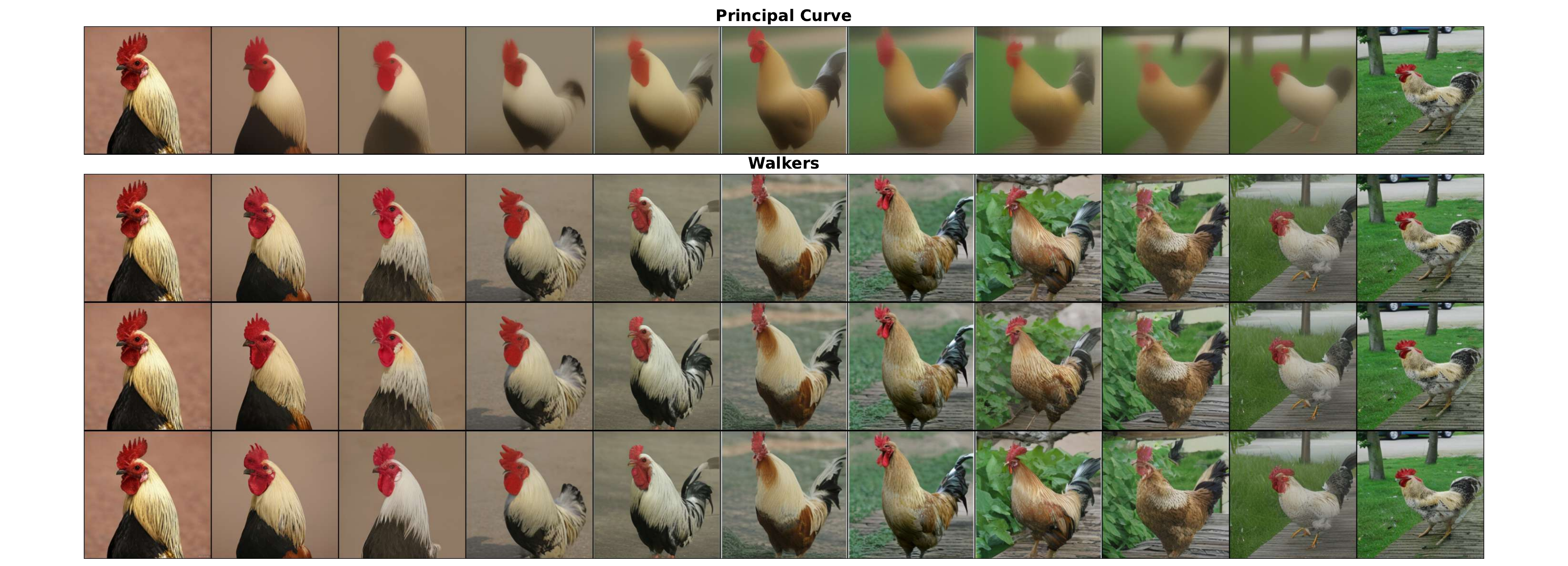}
    \caption{A principal curves with three realizations coming from this curve}
    \label{fig:genea_class_007}
\end{figure}

\begin{figure}[H]
    \centering
    \includegraphics[width=\linewidth]{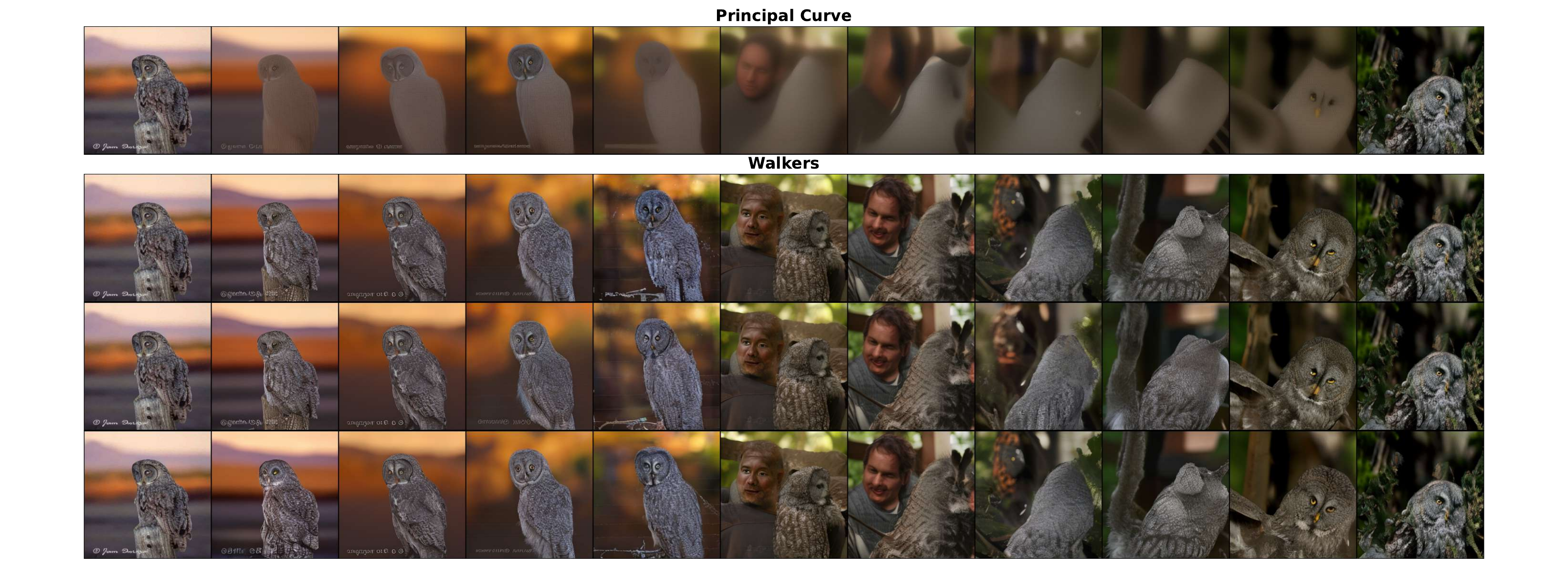}
    \caption{A principal curves with three realizations coming from this curve}
    \label{fig:genea_class_024}
\end{figure}

\begin{figure}[H]
    \centering
    \includegraphics[width=\linewidth]{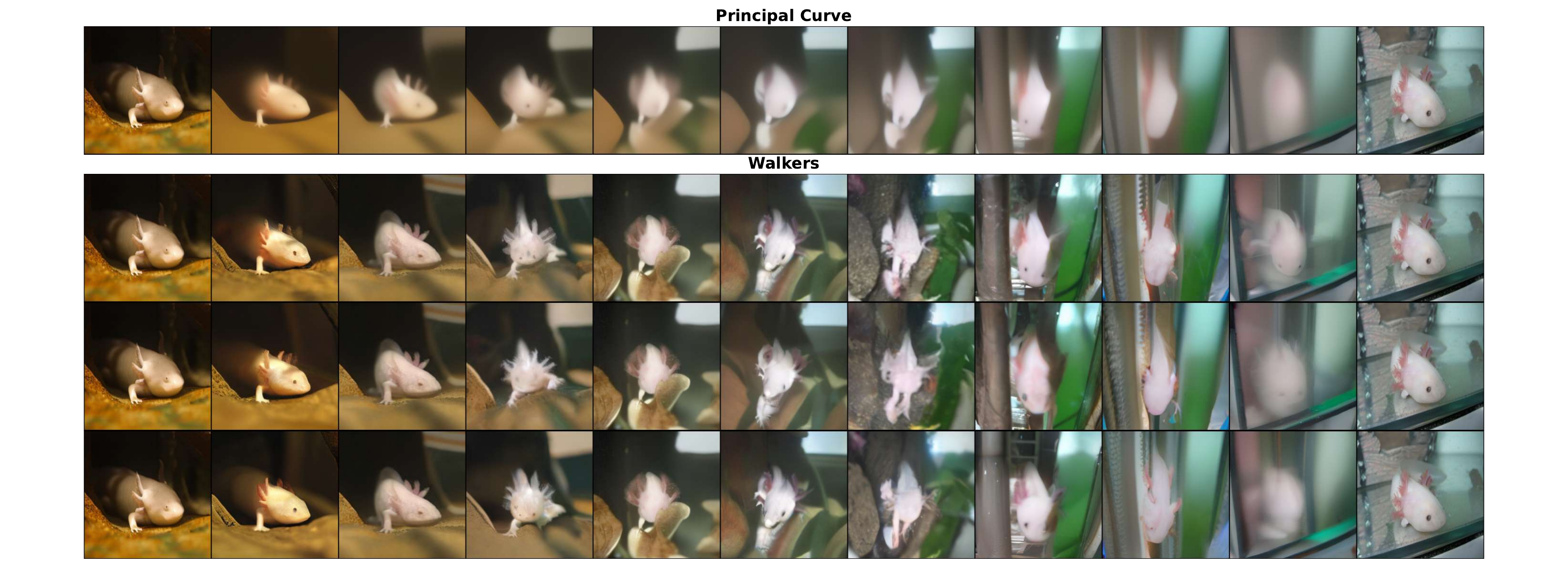}
    \caption{A principal curves with three realizations coming from this curve}
    \label{fig:genea_class_029}
\end{figure}

\begin{figure}[H]
    \centering
    \includegraphics[width=\linewidth]{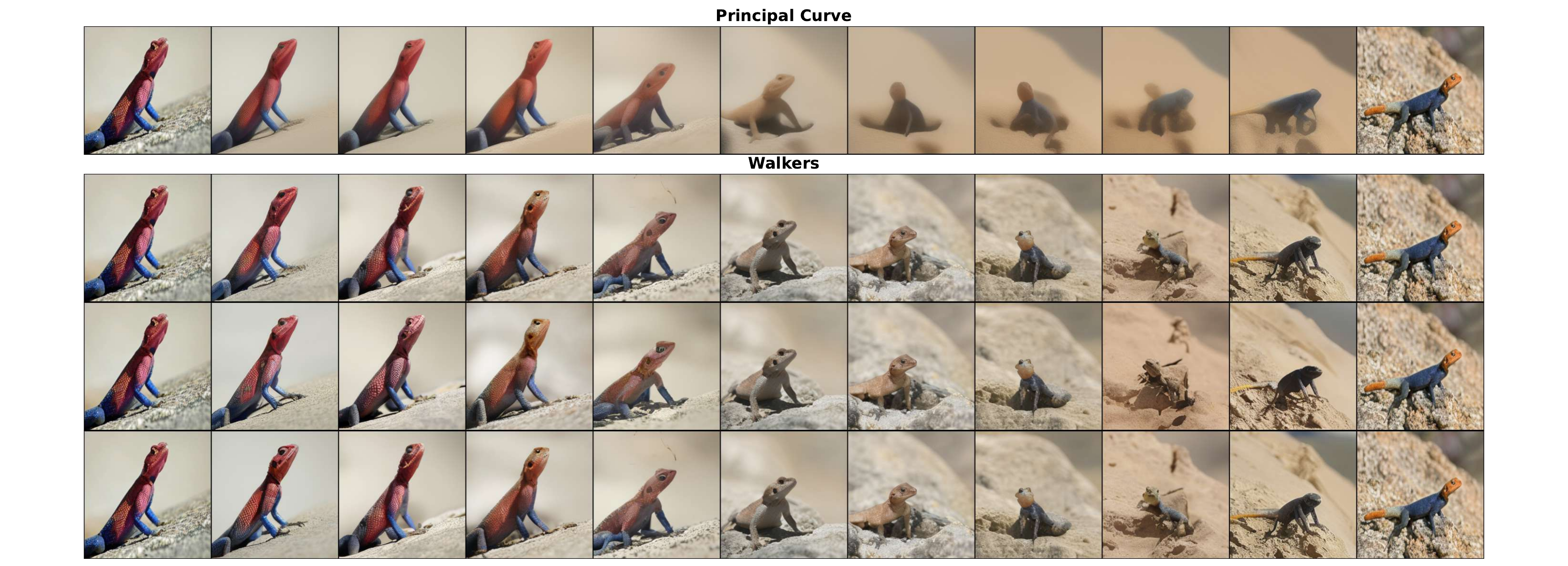}
    \caption{A principal curves with three realizations coming from this curve}
    \label{fig:genea_class_042}
\end{figure}

\begin{figure}[H]
    \centering
    \includegraphics[width=\linewidth]{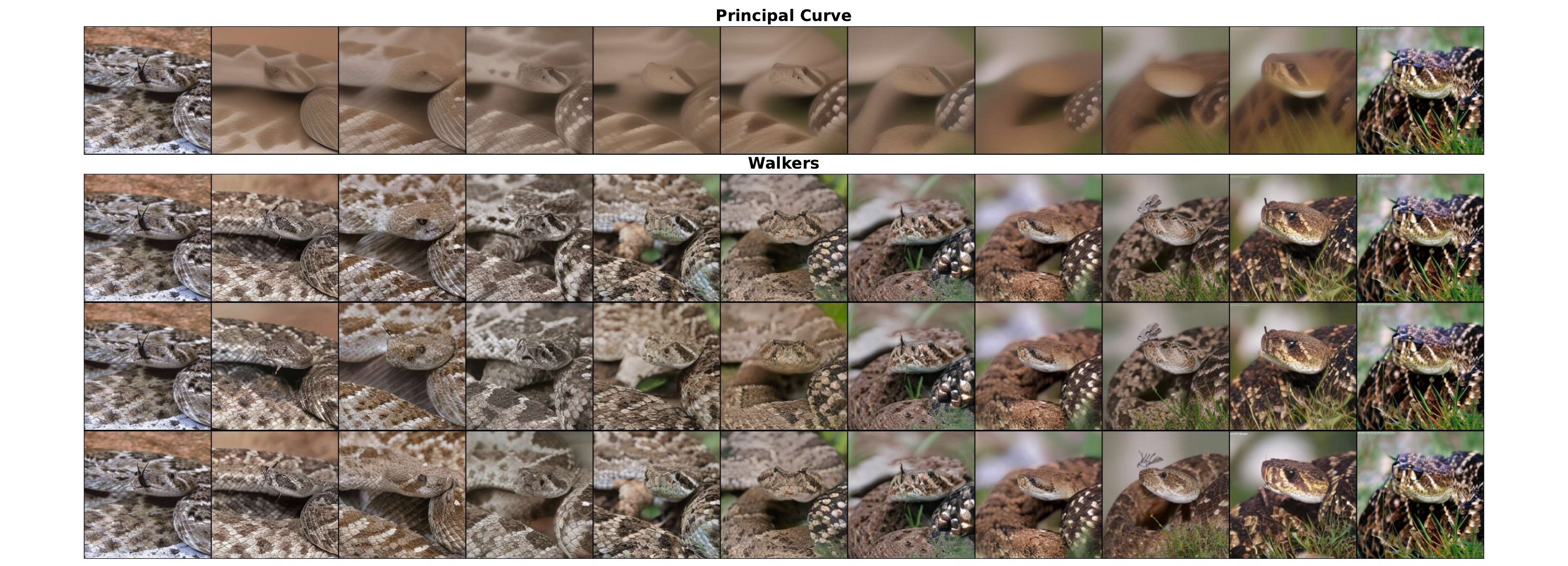}
    \caption{A principal curves with three realizations coming from this curve}
    \label{fig:genea_class_067}
\end{figure}

\begin{figure}[H]
    \centering
    \includegraphics[width=\linewidth]{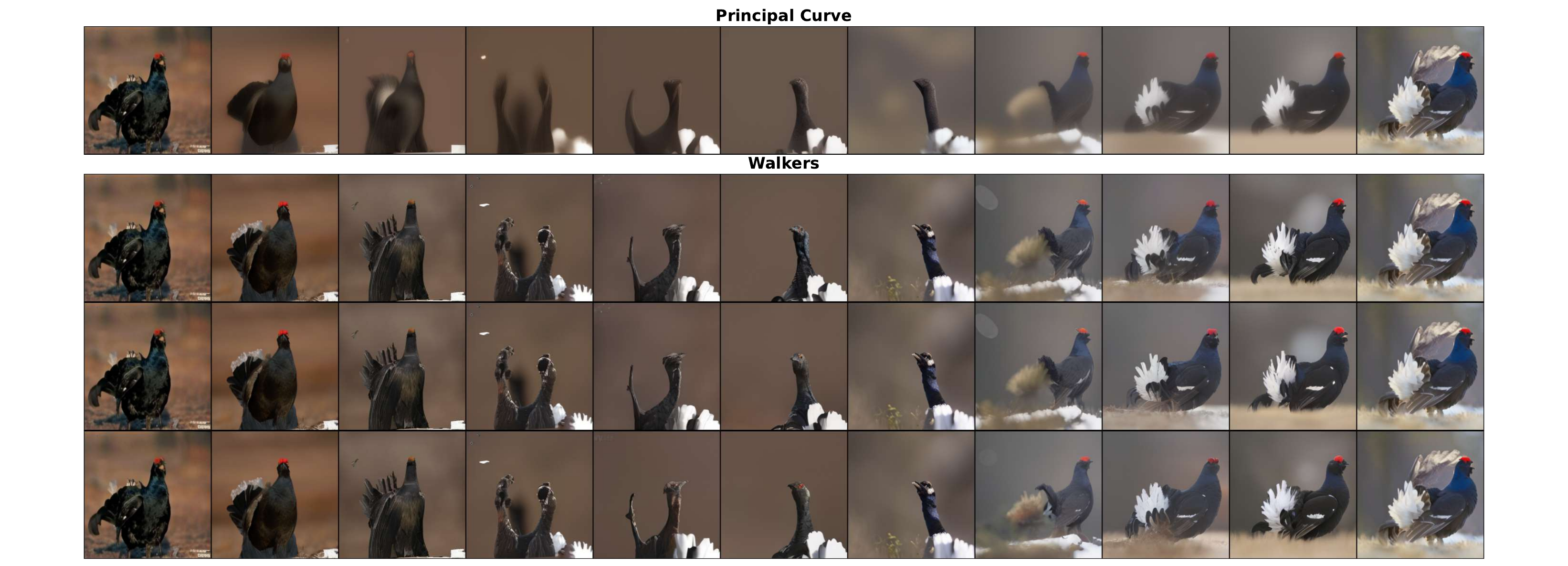}
    \caption{A principal curves with three realizations coming from this curve}
    \label{fig:genea_class_080}
\end{figure}

\begin{figure}[H]
    \centering
    \includegraphics[width=\linewidth]{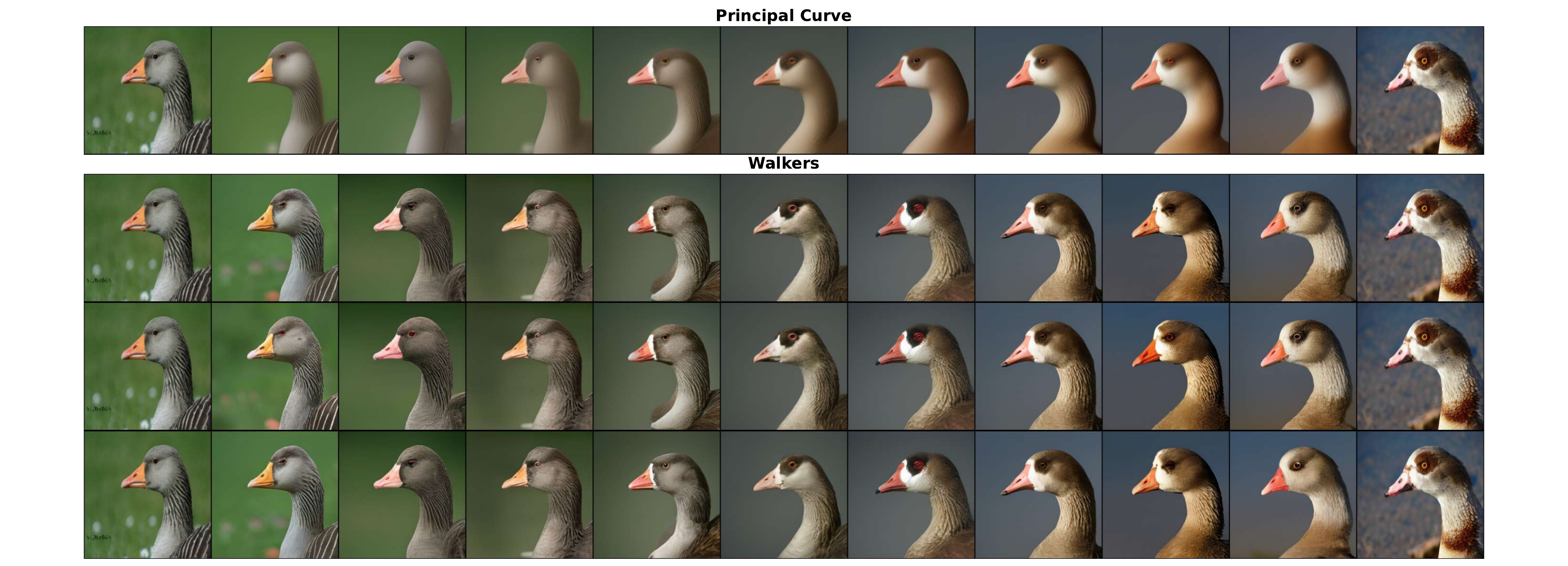}
    \caption{A principal curves with three realizations coming from this curve}
    \label{fig:genea_class_099}
\end{figure}

\begin{figure}[H]
    \centering
    \includegraphics[width=\linewidth]{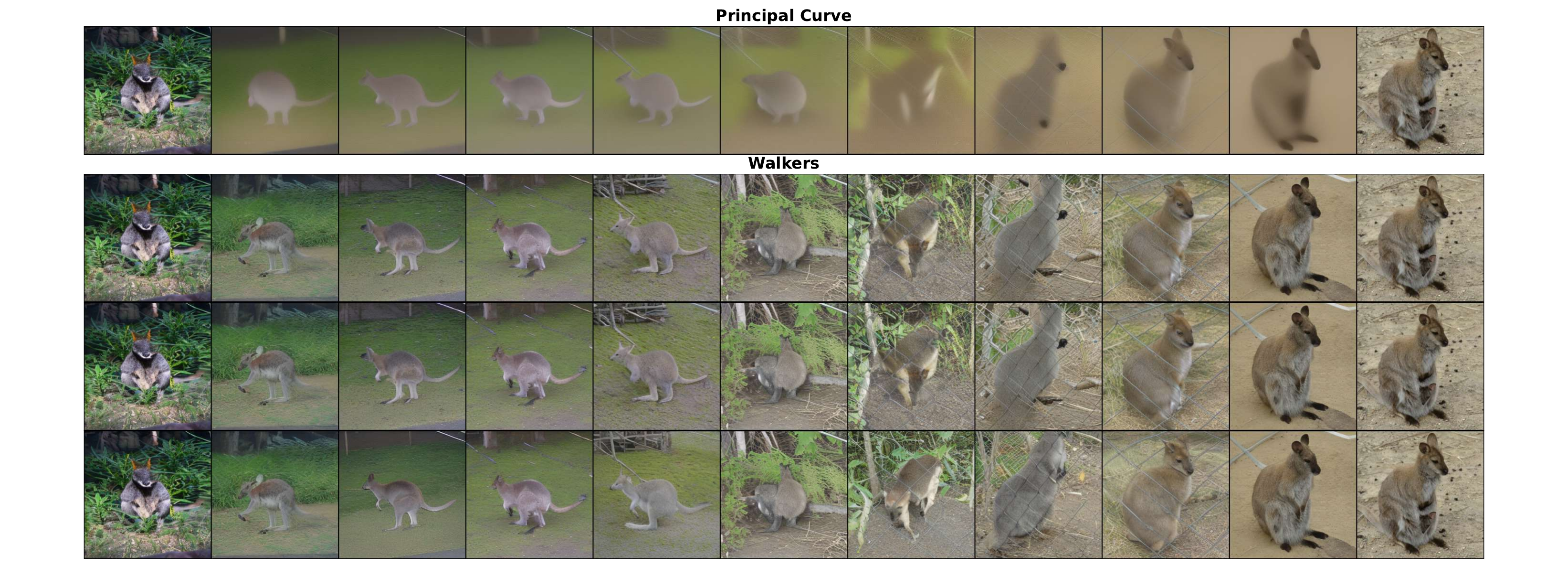}
    \caption{A principal curves with three realizations coming from this curve}
    \label{fig:genea_class_104}
\end{figure}

\begin{figure}[H]
    \centering
    \includegraphics[width=\linewidth]{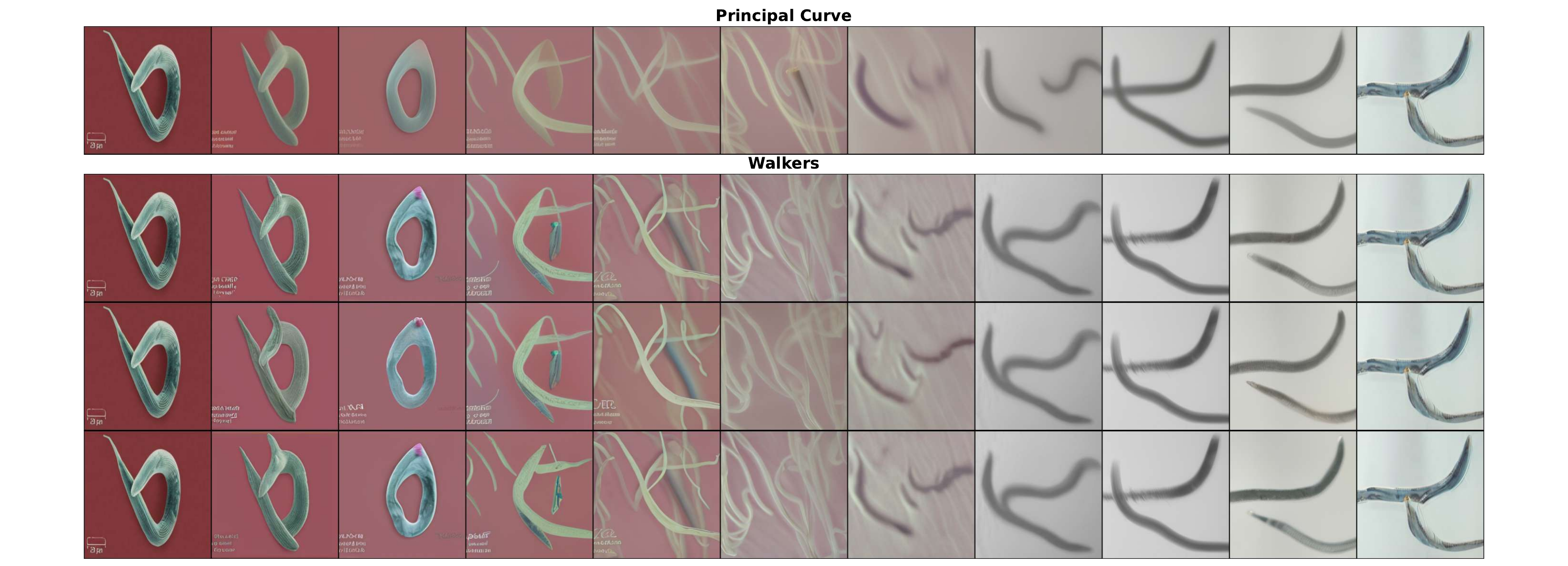}
    \caption{A principal curves with three realizations coming from this curve}
    \label{fig:genea_class_111}
\end{figure}

\begin{figure}[H]
    \centering
    \includegraphics[width=\linewidth]{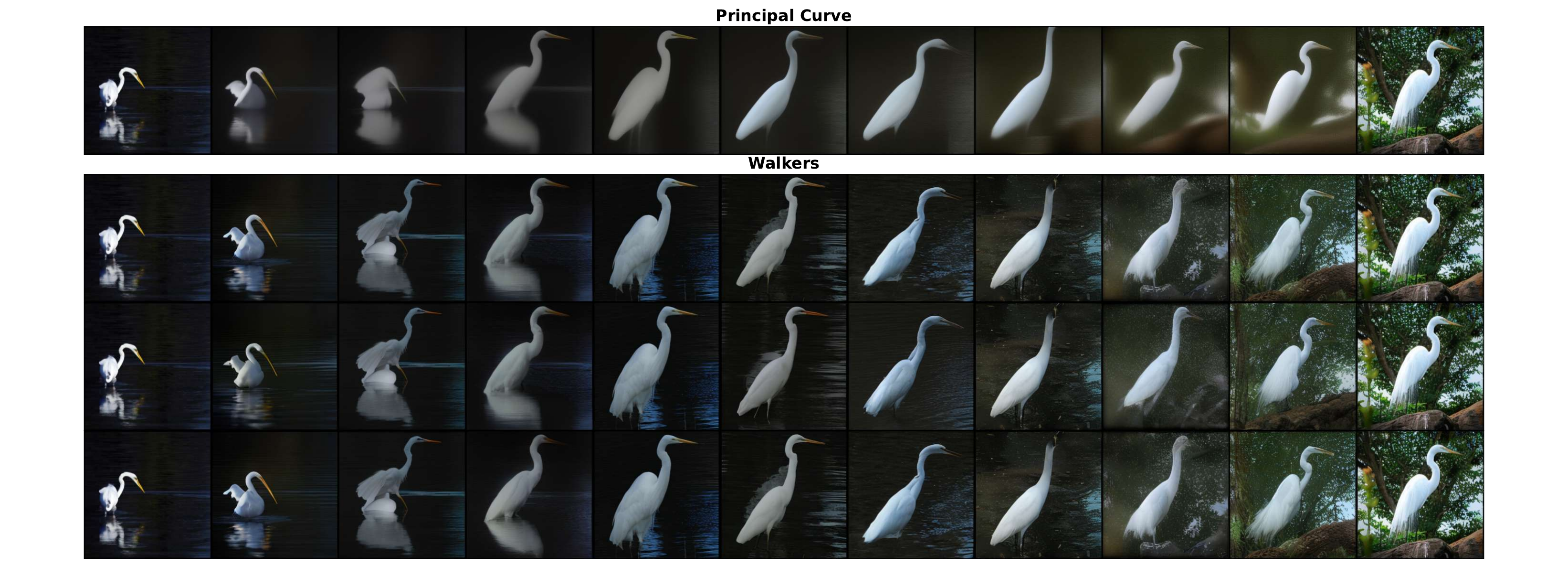}
    \caption{A principal curves with three realizations coming from this curve}
    \label{fig:genea_class_132}
\end{figure}

\end{document}

%% file: string_fig.tex
\begin{tikzpicture}

    \begin{scope}[rotate=20]
        \draw[line width=1.2pt, black] (0,0) .. controls (1,0.8) and (3,0.8) .. (4,0)
            coordinate[pos=0.00] (A1)
            coordinate[pos=0.33] (A2)
            coordinate[pos=0.66] (A3)
            coordinate[pos=1.00] (A4);
    \end{scope}
    
    \foreach \p in {A1, A2, A3, A4} {
        \node[circle, draw=black, fill=white, minimum size=5pt, inner sep=0pt, line width=1.2pt] at (\p) {};
    }
    
    \node[right=5pt, font=\Large] at (A4) {$\phi_t$};

    \begin{scope}[rotate=20]
        \draw[line width=1.2pt, black] (0,1.3) .. controls (1.8,2.1) and (4.5,2.1) .. (6,1.3)
            coordinate[pos=0.00] (C1)
            coordinate[pos=0.22] (C2)
            coordinate[pos=0.48] (C3)
            coordinate[pos=1.00] (C4)
            coordinate[pos=0.00] (B1)
            coordinate[pos=0.33] (B2)
            coordinate[pos=0.66] (B3)
            coordinate[pos=1.00] (B4);
    \end{scope}
    
    \node[circle, draw=black, fill=white, minimum size=5pt, inner sep=0pt, line width=1.2pt] at (B1) {};
    \node[circle, draw=black, fill=white, minimum size=5pt, inner sep=0pt, line width=1.2pt] at (B2) {};
    \node[circle, draw=black, fill=white, minimum size=5pt, inner sep=0pt, line width=1.2pt] at (B3) {};
    \node[circle, draw=black, fill=white, minimum size=5pt, inner sep=0pt, line width=1.2pt] at (B4) {};
    
    \foreach \p in {C2, C3} {
        \draw[line width=1.2pt, black] (\p) ++(-0.12,-0.12) -- ++(0.24,0.24);
        \draw[line width=1.2pt, black] (\p) ++(-0.12,0.12) -- ++(0.24,-0.24);
    }
    
    \node[right=5pt, font=\Large] at (B4) {$\phi_{t+1}$};

    \begin{scope}[rotate=20]
        \draw[line width=1.2pt, black] (0,2.6) .. controls (2.5,3.6) and (6,3.6) .. (8,2.6)
            coordinate[pos=0.00] (E1)
            coordinate[pos=0.25] (E2)
            coordinate[pos=0.52] (E3)
            coordinate[pos=1.00] (E4)
            coordinate[pos=0.00] (D1)
            coordinate[pos=0.33] (D2)
            coordinate[pos=0.66] (D3)
            coordinate[pos=1.00] (D4);
    \end{scope}
    
    \node[circle, draw=black, fill=white, minimum size=5pt, inner sep=0pt, line width=1.2pt] at (D1) {};
    \node[circle, draw=black, fill=white, minimum size=5pt, inner sep=0pt, line width=1.2pt] at (D2) {};
    \node[circle, draw=black, fill=white, minimum size=5pt, inner sep=0pt, line width=1.2pt] at (D3) {};
    \node[circle, draw=black, fill=white, minimum size=5pt, inner sep=0pt, line width=1.2pt] at (D4) {};
    
    \foreach \p in {E2, E3} {
        \draw[line width=1.2pt, black] (\p) ++(-0.12,-0.12) -- ++(0.24,0.24);
        \draw[line width=1.2pt, black] (\p) ++(-0.12,0.12) -- ++(0.24,-0.24);
    }
    
    \node[right=5pt, font=\Large] at (D4) {$\phi_{t+2}$};

    \draw[-{Stealth}, dashed, gray, line width=1pt, shorten <=4pt, shorten >=4pt] (A1) -- (C1);
    \draw[-{Stealth}, dashed, gray, line width=1pt, shorten <=4pt, shorten >=4pt] (A2) -- (C2);
    \draw[-{Stealth}, dashed, gray, line width=1pt, shorten <=4pt, shorten >=4pt] (A3) -- (C3);
    \draw[-{Stealth}, dashed, gray, line width=1pt, shorten <=4pt, shorten >=4pt] (A4) -- (C4);

    \draw[-{Stealth}, dashed, gray, line width=1pt, shorten <=4pt, shorten >=4pt] (B1) -- (E1);
    \draw[-{Stealth}, dashed, gray, line width=1pt, shorten <=4pt, shorten >=4pt] (B2) -- (E2);
    \draw[-{Stealth}, dashed, gray, line width=1pt, shorten <=4pt, shorten >=4pt] (B3) -- (E3);
    \draw[-{Stealth}, dashed, gray, line width=1pt, shorten <=4pt, shorten >=4pt] (B4) -- (E4);

    \draw[-{Stealth}, dotted, blue!60, line width=1pt, shorten <=5pt, shorten >=5pt, bend left=40] (C2) to (B2);
    \draw[-{Stealth}, dotted, blue!60, line width=1pt, shorten <=5pt, shorten >=5pt, bend right=40] (C3) to (B3);
    
    \draw[-{Stealth}, dotted, blue!60, line width=1pt, shorten <=5pt, shorten >=5pt, bend left=40] (E2) to (D2);
    \draw[-{Stealth}, dotted, blue!60, line width=1pt, shorten <=5pt, shorten >=5pt, bend right=40] (E3) to (D3);

    \begin{scope}[shift={(5.5,0)}]
        \draw[rounded corners, fill=white, draw=gray!50] (-0.3,-0.4) rectangle (2.2,1.1);
        
        \draw[-{Stealth}, dashed, gray, line width=1pt] (0,0.7) -- (1,0.7);
        \node[right] at (1.1,0.7) {$v_t$};
        
        \draw[-{Stealth}, dotted, blue!60, line width=1pt] (0,0.1) -- (1,0.1);
        \node[right] at (1.1,0.1) {Rep};
    \end{scope}
    
\end{tikzpicture}